\newcommand{\tool}{\textsc{DP-Hype}}
\newcommand{\numclients}{n}
\newcommand{\dataset}{D}
\newcommand{\hpset}{H}
\newcommand{\numhps}{p}
\newcommand{\priveps}{\varepsilon}
\newcommand{\privdelta}{\delta}
\newcommand{\numlocaltops}{k}
\newcommand{\noisemultiplier}{\sigma}
\newcommand{\privrdpeps}{\varepsilon'} 
\newcommand{\algo}{\mathcal{M}} 
\newcommand{\votingvec}{\vb{v}}
\newcommand{\ltwosens}{\Delta_2}
\newcommand{\loss}{\mathcal{L}}
\newcommand{\opt}{\textsc{Opt}}
\newcommand{\randguess}{\textsc{RandGuess}}
\newcommand{\dirichletparam}{\beta}
\DeclarePairedDelimiterX{\infdivx}[2]{(}{)}{%
  #1\;\delimsize\|\;#2%
}
\newcommand{\infdiv}{\mathcal{D}_\alpha\infdivx}
\newtheorem{theorem}{Theorem}[section]
\newtheorem{definition}{Definition}[section]
\newtheorem{lemma}{Lemma}[section]
\DeclareMathOperator*{\argmax}{argmax}
\DeclareMathOperator*{\argmin}{argmin}
\let\oldnl\nl
\newcommand{\nonl}{\renewcommand{\nl}{\let\nl\oldnl}}
\newcommand{\xmark}{\textcolor{red}{\ding{55}}}
\newcommand{\bcmark}{\textcolor{green!60!black}{\ding{52}}}
\newcommand{\commbx}[2]{%
  \makebox[2.7em][r]{$#1$} /%
  \makebox[2.7em][r]{$#2$}%
}
\begin{document}

\title{\tool: Federated Differentially Private Hyperparameter Search}

\author{
\IEEEauthorblockN{
Johannes Liebenow \hspace{6mm}
Thorsten Peinemann \hspace{6mm}
Esfandiar Mohammadi
}
\vspace{1.5mm}
\IEEEauthorblockN{
Institute of IT Security, University of Luebeck, Luebeck, Germany
}
\vspace{1.5mm}
\IEEEauthorblockN{
\{j.liebenow, t.peinemann, esfandiar.mohammadi\}@uni-luebeck.de
}
}

\maketitle

\begin{abstract}
Tuning hyperparameters in federated machine learning can substantially impact model performance. When hyperparameters are tuned on sensitive data, privacy becomes an important challenge and to this end, differential privacy has emerged as the de facto standard for provable privacy. A standard setting in federated learning is that clients agree on a shared setup, i.e., find a compromise from a set of hyperparameters, like a model's learning rate.
Yet, prior work on privacy-preserving hyperparameter tuning is tailored to specific learning tasks, does not account for the privacy leakage of aggregated results, or offers a sub-optimal privacy-utility trade-off.

In this work, we present our algorithm \tool{}, which performs a federated and privacy-preserving hyperparameter search by conducting a federated voting based on local hyperparameter evaluations of clients. In this way, \tool{} selects hyperparameters that lead to a compromise supported by a majority of clients, while maintaining scalability and independence from specific learning tasks. We prove that \tool{} preserves the strong notion of differential privacy called client-level differential privacy and, importantly, show that its privacy guarantees do not depend on the number of hyperparameters. We also provide bounds on its utility guarantees, that is, the probability of finding good hyperparameters, and implement \tool{} as a submodule in the popular Flower framework for federated machine learning. In addition, we evaluate performance on multiple benchmark data sets in iid as well as multiple non-iid settings and demonstrate high utility of \tool{} even under small privacy budgets. 
\end{abstract}

\begin{IEEEkeywords}
Differential Privacy, Federated Learning, Hyperparameter Search, Privacy-Preserving Machine Learning
\end{IEEEkeywords}

Federated learning has made significant advances, yet many learning methods have to be fine-tuned to acceptable hyperparameters before achieving strong utility. 
However, even the process of finding suitable hyperparameters introduces a privacy risk. Outliers can noticeably influence a hyperparameter search, and an adversary can use this information to infer whether certain data points were part of the process or not~\cite{papernot2021hyperparameter}. Consequently, privacy needs to be preserved. For provable privacy guarantees, differential privacy has become the de facto standard. 
Conducting a federated and scalable hyperparameter search while preserving differential privacy is a challenging task: The key challenge is that usually a lot of hyperparameters have to be evaluated, but revealing the results of the evaluations to other parties can leak information about training data, thereby deteriorating the differential privacy (DP) guarantees. 
While in the centralized setting, several techniques~\cite{liu2019private,papernot2021hyperparameter,koskela2023practical} have been proposed that are independent of the number of hyperparameters, these techniques rely on one crucial assumption: The adversary is not able to see intermediate evaluation results, but only sees the final hyperparameter that was chosen. This assumption is not satisfied in the federated setting, and it is not clear how to do so in a scalable and federated manner, i.e., without relying on computation parties or trusted third parties, and without paying the well-known $\sqrt{\#\mathrm{clients}}$-price of local DP techniques~\cite{balle2019blanket}.

\begin{table}[t]
\centering
\caption{Overview of related work for privacy-preserving federated hyperparameter search in comparison with \tool{}.
\textbf{Task Agnostic:} The hyperparameter search can be applied to any learning task with some sort of loss function. 
\textbf{DP Notion:} If the algorithm satisfies differential privacy (DP), which notion does it satisfy exactly. 
\textbf{DP Accounting indep. of \#HPs:} The accounting for the privacy budget does not depend on the (evaluated) number of hyperparameters. Parentheses indicate that the specific property is not fully fulfilled.
}
\begin{tabular}{@{}lccc@{}}
\toprule
\textbf{Algorithm}
& \makecell{\textbf{Task}\\\textbf{Agnostic}}
& \makecell{\textbf{DP}\\\textbf{Notion}}
& \makecell{\textbf{DP Accounting}\\\textbf{indep. of \#HPs}} \\
\midrule
PrivTuna~\cite{mitic2025privacy}       & \bcmark    & \xmark       & -          \\
LDP-DSBO~\cite{chen2024locally}        & \xmark     & local        & (\bcmark)  \\
DP-FTS-DE~\cite{dai2021differentially} & (\bcmark)  & central      & (\bcmark)  \\
Feathers~\cite{seng2022feathers}       & \bcmark    & (local)      & \xmark     \\
\textbf{\tool{} (Ours)}                & \bcmark    & client-level & \bcmark    \\
\bottomrule
\end{tabular}
\label{tbl::overviewRelated}
\end{table}

In the federated setting, there are essentially two scenarios concerning the data distribution: either the local data distributions are sufficiently similar such that looking for a shared set of hyperparameters makes sense, or the local data distributions are vastly different such that each party (or group of parties) have to search for their own hyperparameter set. In this work, we focus on the first scenario where we aim to find a compromise among clients in terms of hyperparameters, e.g. the model learning rate or its momentum. Importantly, finding a compromise does not imply that clients follow the same data distribution (iid). In reality, their data often differs significantly (non-iid), which means that a federated hyperparameter search algorithm must be able to cope with these variations in local distributions to a reasonable extent.   

In this context, there exist some algorithms from prior work that aim for a similar goal (see \Cref{tbl::overviewRelated}).
The first solution is PrivTuna~\cite{mitic2025privacy}, which hides intermediate results by using secure multiparty computation (SMPC). However, PrivTuna does not take into account privacy leakage of the final output.
Furthermore, LDP-DSBO~\cite{chen2024locally} is designed only to optimize certain continuous hyperparameters in learning setups that require federated model training via gradient descent, and this approach is tailored for local DP, which typically requires adding substantial noise for reasonable privacy budgets.
DP-FTS-DE~\cite{dai2021differentially} must rely on a trusted third party to achieve central DP. In addition, each client converges to different hyperparameters, which is why this algorithm is not fully task agnostic as an additional phase would be needed to agree on a shared setup.
Feathers~\cite{seng2022feathers} aims to find a hyperparameter based on loss differences compared to other candidates. However, a closer look at the algorithm reveals that Feathers uses an unbounded loss function without any clipping and its privacy accounting does not include the number of iterations. Thus, local DP cannot be preserved. 
All prior differentially private algorithms have in common that their privacy accounting, the process of tracking where and how much of the privacy budget is spent, is somewhat dependent on the hyperparameters. For both DP-FTS-DE and LDP-DSBO, properties of the hyperparameter search space affect privacy indirectly by influencing how many iterations are needed to reach a target utility. In LDP-DSBO the dimensionality of the released hyperparameter can also affect privacy directly through sensitivity. However, Feathers has the strongest dependence as the number of evaluated candidates immediately requires composition.  

To close this gap, we introduce \tool{}, a new algorithm for a privacy-preserving federated hyperparameter search. 
Our approach is based on the observation that if the data distribution per client does not vary too much then local evaluations correlate with the performance of hyperparameters evaluated on the entire data set~\cite{mitic2025privacy}. 
Thus, \tool{} operates in a cross-silo setting where clients have enough computation power to train machine learning models locally.
On the technical side, we leverage the voting approach of \cite{feddpvoting} in combination with a special privacy accounting~\cite{kairouz2021distributed}. Clients evaluate each proposed hyperparameter locally and are allowed to vote for a few of them. Votes are aggregated using a secure summation protocol, which enables \tool{} to satisfy client-level DP.
This counterintuitive trade, i.e., exchanging the expressiveness of global evaluations with local ones, results in a massive privacy amplification. Local training does not have to preserve DP, and we can reduce the required privacy budget to a minimum while still preserving utility. In particular, the privacy guarantees of \tool{} are independent of the total number of hyperparameters, which is important because the set of hyperparameters is usually large and having to account for each candidate would deteriorate utility quickly. Satisfying client-level DP also offers \tool{} an improved privacy-utility trade-off as an adversary only sees aggregated results, which effectively improves the signal-to-noise ratio for a growing number of clients. In addition, communication overhead is minimal, as each client's output only scales linearly in the number of hyperparameters, and our voting approach can be realized with a single invocation of secure summation.

In summary, we make the following contributions:
\begin{itemize}
    \item We present \tool{}, the first algorithm for a federated hyperparameter search that is task agnostic, preserves client-level DP, and whose privacy guarantees are independent of the number of hyperparameters. In addition, the simple yet efficient design, including a secure summation protocol, makes \tool{} scale to many clients.
    \item We show that \tool{} satisfies client-level DP and quantify the probability of achieving a good compromise. In addition, we provide insights on the behavior of \tool{} by simulating various scenarios.  
    \item \tool{} is implemented as a practical and ready-to-use submodule in Flower, a state-of-the-art framework for federated machine learning.\footnote{
    The source code of our implementation is publicly available at \href{https://github.com/UzL-PrivSec/dp-hype}{\url{https://github.com/UzL-PrivSec/dp-hype}}.}
    \item We perform an evaluation on three benchmark data sets and show that \tool{} performs well in iid as well as non-iid scenarios, even under small privacy budgets.
\end{itemize}

The remainder of this work is structured as follows: 
In~\Cref{sec::prelims}, we introduce differential privacy in combination with secure summation, our threat model, and some notation. In \Cref{sec::dphype}, we present the setting of differentially private federated hyperparameter search, the intuition behind our approach, and the technical details of our algorithm \tool{}. This is followed by a privacy proof (\Cref{sec::privacyproof}) and a utility analysis (\Cref{sec::utility}) in combination with an ablation study based on simulated clients. Then, we present our empirical evaluation of \tool{} on benchmark data sets~(\Cref{sec::eval}). 
Afterward, we discuss important design choices of \tool{}~(\cref{sec:discussion}).
We conclude with an overview of the related work (\Cref{sec::relatedwork}) and our conclusion (\Cref{sec::conclusion}).

\section{Preliminaries}
\label{sec::prelims}
In this section, we introduce the key concepts used in this work, namely differential privacy as well as secure summation, and present our threat model including some notation.

\subsection{Differential Privacy}
Differential privacy (DP) was first introduced by~\cite{dwork2006differential} and provides a mathematical framework to protect personal information of individuals when applying an algorithm to sensitive data. 
If an algorithm preserves differential privacy, then each client can plausibly deny that their data was part of the input when the attacker observes a specific output.
In a setting where the data set is federated among clients and the server is trusted, there are essentially two scenarios called client-level and record-level DP. Client-level DP aims to hide the influence of an entire client's data, in contrast to record-level DP, where an adversary knows all data points of all clients except for one data point of the targeted client. In this work, we consider client-level DP to achieve the strongest privacy protection.

In mathematical terms, it means that the ratio of output distributions for specific inputs when replacing data from a client is bounded by $\priveps$ except for a probability mass $\privdelta$. In this context, $\priveps$ is called the privacy budget and regulates the degree of privacy protection. DP naturally introduces a privacy-utility trade-off as more privacy budget means that the output of the corresponding algorithm can depend more on the input to achieve a higher degree of utility but also less privacy protection, and vice versa.

Let $\mathcal{\dataset}$ be the set of all data sets. Given data sets $\dataset = \dataset_1 \cup \dots \cup \dataset_\numclients \in \mathcal{\dataset}$ and $\dataset' = (\dataset \setminus \dataset_i)  \cup \dataset'_i$ for some client $i$, the data sets $\dataset$ and $\dataset'$ are called \emph{neighboring}.
\begin{definition}[Differential Privacy~\cite{dwork2006differential}]
    Given $\varepsilon \ge 0$ and $\delta \in [0,1]$, then a randomized algorithm $\mathcal{M}: \mathcal{\dataset} \rightarrow \mathcal{Y}$ preserves differential privacy if for all neighboring data sets $\dataset, \dataset'  \in \mathcal{\dataset}$ and all measurable sets $S \subseteq \mathcal{Y}$:
    \[
    \Pr[\mathcal{M}(\dataset) \in S] \leq e^\varepsilon \Pr[\mathcal{M}(\dataset') \in S] + \delta.
    \]
\end{definition}
Two useful properties of DP are post-processing and sequential composition. DP is closed under any post-processing as long as it is not based on sensitive data. The composition of multiple DP algorithms also yields a DP algorithm where the exact guarantees depend on the setting, either adaptive or non-adaptive, and on the composition theorem itself~\cite{dwork2006differential}. In the worst case, however, $\varepsilon$ scales roughly by the square root of the number of compositions while $\privdelta$ composes additively. 

We use an alternative flavor of differential privacy, called Rényi-DP (RDP), first introduced by~\cite{mironov2017renyi}. RDP is based on the Rényi divergence of the output distribution of an algorithm regarding neighboring data sets, and naturally translates to DP. 
\begin{definition}[Rényi Differential Privacy~\cite{mironov2017renyi}]
    A randomized algorithm $\mathcal{M}$ preserves $\varepsilon$-RDP of order $\alpha$ if for all neighboring data sets $\dataset, \dataset' \in \mathcal{\dataset}$:
    \[
        \infdiv{\mathcal{M}(\dataset)}{\mathcal{M}(\dataset')} \leq \varepsilon
    \]
    where $\infdiv{.}{.}$ is the Rényi divergence. 
\end{definition}
RDP can be transformed back to approximate DP for any fixed $\alpha$. To the best of our knowledge, the currently tightest conversion is the following:
\begin{theorem}[From RDP to DP~\cite{balle2020hypothesis}]
\label{th::rdpToAdp}
    If a randomized algorithm $\mathcal{M}$ satisfies $(\alpha, \varepsilon)$-RDP then it also satisfies $(\varepsilon + \log((\alpha-1)/\alpha) - (\log \delta + \log \alpha)/(\alpha-1), \delta)$-DP for any $\delta \in (0,1)$.
\end{theorem}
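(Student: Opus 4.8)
The plan is to reduce the $(\varepsilon',\delta)$-DP guarantee to a bound on the hockey-stick divergence and then control that divergence using only the moment information that RDP supplies. Writing $P=\mathcal{M}(\dataset)$ and $Q=\mathcal{M}(\dataset')$ for a fixed ordered pair of neighboring data sets, I would first invoke the standard equivalence that $(\varepsilon',\delta)$-DP for this pair holds exactly when $\sup_{S}\big(\Pr[P\in S]-e^{\varepsilon'}\Pr[Q\in S]\big)\le\delta$, and that this supremum is attained by the likelihood-ratio test and equals $\int\max\{0,\,p(y)-e^{\varepsilon'}q(y)\}\,dy$, with $p,q$ the densities of $P,Q$. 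Introducing the privacy-loss variable $L=\log\!\big(p(y)/q(y)\big)$ for $y\sim P$, this integral rewrites as the single expectation $\mathbb{E}_{P}\big[(1-e^{\varepsilon'-L})_+\big]$, where $(x)_+=\max\{x,0\}$. Because the RDP hypothesis is quantified over all (ordered) neighboring pairs, it suffices to establish this one-sided bound for a generic $P,Q$; the reverse direction then follows by relabeling.

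The bridge to RDP is the elementary identity $\mathbb{E}_{P}\big[e^{(\alpha-1)L}\big]=\int p(y)^{\alpha}q(y)^{1-\alpha}\,dy=e^{(\alpha-1)\,\infdiv{P}{Q}}\le e^{(\alpha-1)\varepsilon}$, valid for $\alpha>1$ and equal to the exponentiated form of the $(\alpha,\varepsilon)$-RDP assumption. The central step is then to find the tightest pointwise majorization of the reward by the constraint function, i.e. the smallest constant $C$ with $(1-e^{\varepsilon'-\ell})_+\le C\,e^{(\alpha-1)\ell}$ for every real $\ell$. Taking expectations of this inequality under $P$ and applying the identity above immediately yields $\mathbb{E}_{P}[(1-e^{\varepsilon'-L})_+]\le C\,e^{(\alpha-1)\varepsilon}$.

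To pin down $C$ I would maximize $f(\ell)=(1-e^{\varepsilon'-\ell})\,e^{-(\alpha-1)\ell}$ over the active region $\ell>\varepsilon'$. Solving $f'(\ell)=0$ gives the stationary point $\ell^\star=\varepsilon'+\log\frac{\alpha}{\alpha-1}$, at which $1-e^{\varepsilon'-\ell^\star}=1/\alpha$; substituting back produces $C=\frac{(\alpha-1)^{\alpha-1}}{\alpha^{\alpha}}\,e^{-(\alpha-1)\varepsilon'}$. Combined with the previous bound this gives $\mathbb{E}_{P}[(1-e^{\varepsilon'-L})_+]\le\frac{(\alpha-1)^{\alpha-1}}{\alpha^{\alpha}}\,e^{(\alpha-1)(\varepsilon-\varepsilon')}$, so it remains to set the right-hand side equal to $\delta$ and solve for $\varepsilon'$; taking logarithms and rearranging reproduces exactly $\varepsilon'=\varepsilon+\log\frac{\alpha-1}{\alpha}-\frac{\log\delta+\log\alpha}{\alpha-1}$.

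I expect the only genuine obstacle to be the optimization that fixes $C$. The crude route of bounding $(1-e^{\varepsilon'-L})_+\le\mathbb{1}[L>\varepsilon']$ and applying Markov's inequality gives the weaker constant $C=e^{-(\alpha-1)\varepsilon'}$, i.e. $\delta=e^{(\alpha-1)(\varepsilon-\varepsilon')}$; it is precisely the improvement factor $\frac{(\alpha-1)^{\alpha-1}}{\alpha^{\alpha}}=\frac1\alpha\big(1-\tfrac1\alpha\big)^{\alpha-1}$, gained by retaining the $-e^{\varepsilon'}q$ mass rather than discarding it, that yields the tight conversion claimed here. I would also verify the edge behavior to confirm $\ell^\star$ is the global maximizer (for $\ell\le\varepsilon'$ the positive part vanishes so $f\le0$, and as $\ell\to\infty$ the reward is bounded by $1$ while $e^{(\alpha-1)\ell}\to\infty$), and record that the argument requires $\alpha>1$ so that the exponent $\alpha-1$ is positive.
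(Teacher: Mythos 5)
The paper does not actually prove this theorem: it is imported verbatim from Balle et al.~\cite{balle2020hypothesis} and used only as a black box to calibrate the noise scale $\noisemultiplier$, so there is no in-paper proof to compare yours against. Judged on its own, your proof is correct. I checked the computational core: the stationary point of $f(\ell)=(1-e^{\varepsilon'-\ell})e^{-(\alpha-1)\ell}$ is indeed $\ell^\star=\varepsilon'+\log\frac{\alpha}{\alpha-1}$ with $f(\ell^\star)=\frac{(\alpha-1)^{\alpha-1}}{\alpha^{\alpha}}e^{-(\alpha-1)\varepsilon'}$, the boundary cases $\ell\le\varepsilon'$ and $\ell\to\infty$ are harmless for $\alpha>1$, and setting $\frac{(\alpha-1)^{\alpha-1}}{\alpha^{\alpha}}e^{(\alpha-1)(\varepsilon-\varepsilon')}=\delta$ and solving for $\varepsilon'$ reproduces exactly the stated formula. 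Your route --- rewriting the hockey-stick divergence as $\mathbb{E}_P[(1-e^{\varepsilon'-L})_+]$ and majorizing the integrand pointwise by $C\,e^{(\alpha-1)\ell}$ --- is a legitimate alternative to the original argument of Balle et al., which instead applies the data-processing inequality for R\'enyi divergence to the indicator of an event $S$, obtaining $P(S)^\alpha\le e^{(\alpha-1)\varepsilon}Q(S)^{\alpha-1}$, and then maximizes $P(S)-e^{\varepsilon'}Q(S)$ over the scalar $Q(S)\in[0,1]$. The two one-dimensional optimizations are dual to one another and land on the same constant $\frac{1}{\alpha}\bigl(1-\frac{1}{\alpha}\bigr)^{\alpha-1}$; yours is more elementary (nothing beyond a moment bound on the privacy-loss random variable), while theirs makes the hypothesis-testing interpretation explicit. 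Two small points of rigor you should add: take densities with respect to a common dominating measure such as $P+Q$ so that $L$ is well defined in general, and note that on the set where $q=0<p$ the pointwise majorization holds trivially (the right-hand side is infinite there; in fact finiteness of the RDP bound forces this set to be $P$-null); and, as you already remark, the argument must be run for both ordered neighboring pairs, which is fine because the RDP hypothesis is quantified over both as well.
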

On this basis, we introduce the Gaussian mechanism, which is an important building block in this work. The mechanism adds Gaussian noise to the output of a function $f$ calibrated on the L$2$ sensitivity $\Delta_2f$ of that function and privacy parameters $\priveps,\privdelta$. In this context, $\Delta_2f = \max_{\dataset, \dataset'} ||f(\dataset) - f(\dataset')||_2$ is the worst-case change in the L$2$ norm in the output that can be observed by the adversary regarding all possible pairs of neighboring data sets.
\begin{definition}[Gaussian Mechanism \cite{mironov2017renyi}]
\label{def:multvarigaussian}
    Let $\algo: \dataset \rightarrow \mathbb{R}^d$ be defined as $\algo(\dataset) = f(\dataset) + \vb{z}$ for some function $f: \dataset \rightarrow \mathbb{R}^d$ with $\vb{z} \sim \mathcal{N}(0, \sigma^2\mathbf{I}_d)$. Then $\algo$ satisfies $(\alpha,\frac{\alpha(\Delta_2f)^2}{2\sigma^2})$-RDP.
\end{definition}

\paragraph{Simulated Central Scenario using SMPC}
In the federated learning scenario, without additional tools, clients have to preserve privacy on their own, which is called local DP (LDP). However, LDP usually requires a lot of noise and can drastically reduce utility~\cite{balle2019blanket}. Clients could also entrust the server with their sensitive data, yet this is an unrealistic trust assumption. Consequently, secure multi-party computation (SMPC) is often used. With SMPC, clients can compute a function securely without having to reveal their inputs to the adversary, except for the final result of the computation. Hiding intermediate results can be a massive privacy amplification, as the central scenario can be simulated. In terms of DP, if the adversary can only learn the final result of the computation and cannot single out sensitive information of individual clients, then there is no difference to the central scenario, where the adversary by definition can only observe the final output of the algorithm. However, most SMPC solutions come at the cost of being scalable to no more than a handful of clients~\cite{wirth2022easysmpc,gamiz2025challenges,zhou2024secure,hu2021make}. But in the special case of computing a federated sum, secure summation protocols have proven highly scalable with low resource overhead~\cite{secsum_bell}. Throughout this work, we use the terms secure aggregation and secure summation interchangeably.

If each of the $n$ clients locally adds noise drawn from $\mathcal{N}(0,\sigma^2/\numclients)$, then by applying secure summation, the guarantees of the Gaussian mechanism hold although the noise is added partially by each client. This directly follows from the fact that the sum of $\numclients$ independent normal random variables with variance $\sigma^2 / \numclients$ is normal with variance $\numclients \cdot (\sigma^2 / \numclients) = \sigma^2$.

\subsection{Threat \& System Model}
We assume a data set $\dataset = \dataset_1 \cup \dots \cup \dataset_\numclients \in \mathcal{D}$ that is federated among $\numclients$ clients, where each client $i$ holds their own private subset $\dataset_i$. Important for DP, each user only contributes data to a single client's data set. For ease of presentation, we assume the availability of a server that coordinates the protocol. However, such a server is not mandatory for our algorithm and the same role could also be fulfilled by one or more clients.  
When results derived from sensitive data need to be shared, clients interact with the server only through a secure summation protocol.

We are in the cross-silo setting, where clients are assumed to be capable of running part of the learning algorithm locally, such as training a model on their private data. This is a standard setting, especially when, for example, gradients are shared with a server, as this already requires local training~\cite{fedavg}.
The server does not require special computational resources since most of the overhead is caused by the secure summation protocol, which is considered to entail only a small overhead. 

We assume an honest-but-curious model for both the server and clients, which is in line with recent work~\cite{geyer2017differentially,mcmahan2017learning}. 
Our algorithm is explicitly designed to preserve differential privacy without relying on a trusted third party and the adversary is assumed to reside on the server side.

\subsection{Notation}
In this work, we adopt the following notation. The normal distribution with mean $\mu$ and standard deviation $\sigma$ is denoted by $\mathcal{N}(\mu,\sigma^2)$. The normal CDF with $\mu=0$ is denoted as $\Phi_{\sigma^2}$. Vectors are represented using small bold letters. A random vector $\vb{z} \sim \mathcal{N}(0,\sigma^2\mathbf{I}_d)$ indicates that $\vb{z}$ is sampled from a $d$-dimensional multivariate normal distribution, where $\mathbf{I}_d \in \mathbb{R}^{d \times d}$ denotes the identity matrix of dimension $d$. The $d$th component of a vector $\vb{v}$ is accessed as $\vb{v}[d]$.  Furthermore, we say that $[t]$ denotes the index set $\{1,\dots,t\} \subseteq \mathbb{N}$.

\section{DP-Hype}
\label{sec::dphype}
In this section, we introduce \tool{}, an algorithm to perform a federated privacy-preserving hyperparameter search. First, we discuss the setting of such a hyperparameter search. Then, we explore the high-level idea behind \tool{} and present technical details as well as the pseudocode.

\subsection{Problem Statement}
\begin{figure*}[t]
    \includegraphics{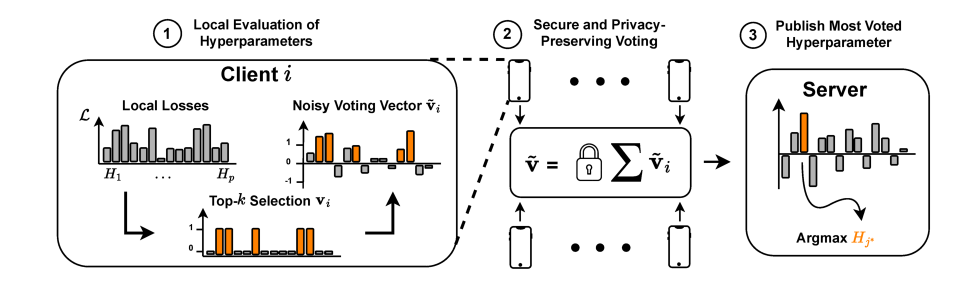}
    \caption{A conceptual overview of our privacy-preserving algorithm for federated hyperparameter search \tool{}. The set of hyperparameters $\hpset = \{\hpset_1, \dots, \hpset_\numhps\}$ and the loss function $\loss$ for the given learning task are publicly available. First, each client locally computes the loss for each hyperparameter and selects the top-$\numlocaltops$ hyperparameters with the smallest loss. A voting vector $\votingvec_i$ is created, with a $1$ on each position corresponding to the top-$\numlocaltops$ hyperparameters and a $0$ elsewhere. Each $\votingvec_i$ is noised and entry-wise aggregated on the server side via secure summation to obtain $\tilde{\votingvec}$. The server then outputs the hyperparameter with the most noisy votes.}
\end{figure*}

Hyperparameter search, also referred to as hyperparameter optimization or tuning, is the task of finding hyperparameters for a specific learning problem that maximize the utility of this task. Formally, let $\hpset \subseteq \mathcal{H}$ denote a set of candidate hyperparameters from the space of possible hyperparameters $\mathcal{H}$. The goal is to identify a hyperparameter $\hpset_{j^*} \in \hpset$ that minimizes a loss function $\loss: \mathcal{H} \times \mathcal{D} \rightarrow \mathbb{R}$ on a given data set $\dataset \in \mathcal{D}$ such that
\begin{equation}
\label{eq:globalargmin}
\hpset_{j^*} = \underset{\hpset_j \in \hpset}{\argmin} \ \loss(\dataset, \hpset_j).
\end{equation}
This is the centralized formulation of hyperparameter search, but it becomes substantially more challenging in federated settings.
In federated learning, hyperparameter search can manifest in two fundamentally different ways. If the data across clients is highly non-iid, then different subpopulations typically emerge, each requiring their own hyperparameters, which may differ significantly from those of other clients. In such cases, an additional phase is needed to identify subpopulations and assign suitable hyperparameters to them. On the other hand, when the clients' data does not differ too much, hyperparameter search reduces to finding common hyperparameters that allow clients to perform a learning task jointly. Many federated algorithms such as FedAvg~\cite{fedavg} or FedAdam~\cite{reddi2020adaptive} rely on clients sharing a similar setup, which essentially requires a compromise over hyperparameters. Such a scenario, where clients must agree on shared hyperparameters in a federated manner, is the focus of this work.

Before introducing our approach, it is important to understand why naively optimizing \Cref{eq:globalargmin} in a federated context is infeasible in realistic scenarios, i.e., when the number of hyperparameters is large. Without structural knowledge about the behavior of $\loss(\dataset,\hpset)$, such as monotonicity or the number and locations of optima, the standard strategy is to train a model in a federated and DP manner on every hyperparameter candidate and select the best one. However, this strategy suffers from severe privacy constraints: The privacy budget must be divided among iterations, roughly scaling with the square root of the number of iterations~\cite{dwork2006differential}. Since differentially private federated algorithms already consume a large privacy budget to maintain utility~\cite{geyer2017differentially}, even few iterations lead to an impractically small budget due to composition. Moreover, not every loss function can be published out-of-the-box in a private way, as a bounded sensitivity is mandatory. Even if a bound is enforced, e.g., through clipping, it does not automatically guarantee a meaningful privacy-utility trade-off. Furthermore, private selection algorithms like \cite{liu2019private} or its extension \cite{papernot2021hyperparameter}, which allow the privacy accounting to be independent of the number of iterations, cannot be applied either. This is because they share the assumption that an adversary does not know when the algorithm has stopped and whether this was due to random stopping or because the number of iterations or a threshold was reached.
Consequently, straightforward iterative hyperparameter search cannot be directly applied in federated and privacy-preserving settings.

\subsection{Approach} 

Evaluating each hyperparameter on the entire data set in a federated setting is infeasible under strict privacy constraints. To address this, we relax the formulation in \Cref{eq:globalargmin} and instead consider the following objective:
\begin{equation}
\label{eq:localevals}
\hpset_{j^*} = \underset{\hpset_j \in \hpset}{\argmax} \ |\{ i \in [\numclients] \mid \hpset_j = \underset{\hpset_t \in \hpset}{\argmin} \ \loss(\dataset_i, \hpset_t) \}|.
\end{equation}
Clients evaluate each hyperparameter $\hpset_t \in \hpset$ locally on their data set $\dataset_i$ and select the hyperparameter that reaches a minimum loss such that $\hpset_{j^*}$ is the hyperparameter most clients choose locally.
Throughout this work, we refer to those hyperparameters as good. As we show later, optimizing \Cref{eq:localevals} instead of \Cref{eq:globalargmin} yields strong privacy amplification and produces high-quality hyperparameters that closely align with those obtained by optimizing the global objective. A related idea was introduced in~\cite{mitic2025privacy}, where hyperparameters were locally tuned and averaged by the server, but that approach did not analyze any privacy implications.

A closer inspection of \Cref{eq:localevals} reveals a key advantage regarding privacy: we no longer require training a global model or computing the global loss. The only information needed is whether clients independently favor a particular hyperparameter. This naturally transforms the problem into a voting task, where clients act as voters and hyperparameters represent the candidates. This perspective provides an avenue to leverage well-established techniques from federated, differentially private voting~\cite{feddpvoting}.

Voting offers several notable benefits in this setting. First, differential privacy is naturally supported: if each client is restricted to vote for at most $\numlocaltops$ hyperparameters, then the L$2$-sensitivity, defined as the maximum change in the output due to replacing one client’s data, only grows with $\sqrt{\numlocaltops}$. Second, communication overhead remains low, since clients only need to send a scalar per candidate. However, a naive implementation would still require allocating privacy budget separately for each candidate, which is prohibitive when the number of hyperparameters is large. To overcome this, we combine voting with the privacy accounting framework of~\cite{feddpvoting,kairouz2021distributed}, which allows us to account for the aggregated voting result as a whole rather than per candidate. This results in additional privacy amplification, since the privacy cost depends only on the number of votes $\numlocaltops$ and not on the total number of hyperparameters. The final ingredient is secure aggregation: The privacy guarantees from voting and special accounting hold only if no individual votes are revealed to the server. To ensure this, we employ a secure summation protocol such as~\cite{secsum_bell} that computes the aggregated vote counts per hyperparameter without exposing individual contributions.

\subsection{DP-Hype}

We now introduce our algorithm \tool{}, presented in \Cref{alg::main}, which implements a federated and privacy-preserving hyperparameter search. The algorithm receives as input the federated data set $\dataset$, the set of hyperparameters $\hpset = \{\hpset_1, \dots, \hpset_\numhps\}$, a loss function $\loss: \mathcal{\dataset} \times \mathcal{\hpset} \rightarrow \mathbb{R}$, privacy parameters $\priveps,\privdelta$, and the number of local votes $\numlocaltops$. Its output is a single hyperparameter $\hpset_{j^*} \in \hpset$ that obtains the largest number of noisy votes aggregated across clients. The procedure consists of two phases: a client phase and a server phase. In the client phase, each client $i$ evaluates all hyperparameters $\hpset_j$ on their local data $\dataset_i$ to compute the losses $(\loss(\dataset_i, \hpset_1), \dots, \loss(\dataset_i, \hpset_\numhps))$. A local voting vector $\votingvec_i$ of length $\numhps$ is initialized with zeros, and the entries corresponding to the $\numlocaltops$ smallest losses are set to one. To ensure differential privacy, each local voting vector is perturbed by Gaussian noise scaled by $\noisemultiplier$, yielding a noisy version $\tilde{\votingvec}_i$. The noise scale $\noisemultiplier$ is based on $\numlocaltops,\priveps,\privdelta$ and can be determined using~\Cref{th::rdpToAdp}. After this step, clients jointly execute a secure summation protocol with the server, which ensures that only the aggregated noisy vote vector $\tilde{\votingvec}$ is revealed, while no individual votes are exposed. Since the sum of Gaussian random variables is Gaussian, the added noise guarantees differential privacy. Finally, the server selects the hyperparameter $\hpset_{j^*}$ with the maximum number of aggregated votes.  

All $\numlocaltops$ local votes are equally weighted and are not ranked on purpose. A ranking would either increase sensitivity or smaller ranked votes would quickly drown in noise. We empirically show the downsides of such weighted voting in \Cref{app:rawlosses}. Locally evaluating each hyperparameter introduces computational overhead for clients, but in \Cref{app:parabaseline} we show that this overhead is mandatory to achieve good performance.

\tool{} is agnostic to the specific learning task, requiring only that local and global evaluations of a loss function are available.
For ease of presentation, we describe it as minimizing the loss, but the procedure can equivalently be applied to maximization. Both the set of hyperparameters and their order are assumed to be public, enabling standard search strategies such as random or grid search by reordering candidates. Importantly, the local learning process does not need to satisfy differential privacy and can be performed without any modification, thereby avoiding unnecessary privacy-utility trade-offs. Noise is only introduced when information leaves the clients, in the voting stage. Moreover, communication overhead is minimal, as \tool{} needs only a single invocation of secure summation, making it lightweight in practice. 

\begin{algorithm}[t]
\caption{\tool{}}
\label{alg::main}
\SetKw{KwTo}{in}\SetKwFor{For}{for}{\string:}{}
\KwData{Data set $\dataset = \dataset_1 \cup \dots \cup \dataset_\numclients$, hyperparameter set $\hpset = \{\hpset_1, \dots, \hpset_\numhps\}$, loss function $\loss: \mathcal{D} \times \mathcal{H} \rightarrow \mathbb{R}$, DP-parameters $\priveps, \privdelta$, number of local votes $\numlocaltops$}
\KwResult{$\hpset_{j^*} \in \hpset$}
\nonl \textbf{Clients:} \\
Determine $\noisemultiplier$ based on $\priveps,\privdelta,\numlocaltops$ (\Cref{th::rdpToAdp}). \\
\For{$i$ \KwTo $[\numclients]$}{
    \For{$j$ \KwTo $[\numhps]$}{
        $\mathrm{loss}_j = \loss(\dataset_i, \hpset_j)$
    }
    $\votingvec_i = (0, \dots, 0) \in \mathbb{R}^\numhps$ \\
    \While{$||\votingvec_i||_1 < \numlocaltops$}{
    $j^* = \underset{j \in [\numhps] \land  \votingvec_i[j] = 0}{\argmin} \mathrm{loss}_j$ \\
        $\votingvec_i[j^*] = 1$ 
    }
    $\tilde{\votingvec}_i = \votingvec_i + \vb{z}, \quad \vb{z} \sim \mathcal{N}(0, \frac{\noisemultiplier^2}{\numclients}\mathbf{I}_\numhps)$ \\
}
\nonl \textbf{Server:} \\
$\tilde{\votingvec} = (\sum_{i=1}^\numclients \tilde{\votingvec}_i[1], \dots, \sum_{i=1}^\numclients \tilde{\votingvec}_i[\numhps])$ with SecSum \\
$\hpset_{j^*} = \underset{j \in [\numhps]}{\argmax} \ \tilde{\votingvec}[j]$ \\
\Return $\hpset_{j^*}$
\end{algorithm}

The modular design of \tool{} makes it independent of any specific secure aggregation protocol, as long as it supports summation. This reduces the deployment overhead of \tool{} and allows selecting a protocol that fits the individual requirements best.  A discussion of why we do not rely on a specific protocol can be found in \Cref{sec:discussion}. Possible extensions for \tool{} regarding malicious clients or server and robustness are moved to \Cref{sec::practicalcons} due to space limitations.

\section{Privacy Proof}
\label{sec::privacyproof}
In this section, we prove that \tool{} satisfies client-level $(\alpha, \privrdpeps)$-RDP. Guarantees for client-level $(\priveps, \privdelta)$-DP can then be derived using~\Cref{th::rdpToAdp}. Before starting with the proof, we give an overview of the proof structure.

\subsection{Proof Outline}
Denote \tool{} by $\algo$, to prove that $\algo$ satisfies RDP we have to bound the Rényi divergence $\infdiv{\algo(\dataset)}{\algo(\dataset')} \leq \privrdpeps$ for some $\alpha$ and $\privrdpeps$, and for any pair of neighboring data sets $\dataset, \dataset'$. From an algorithmic point of view, $\algo$ outputs a single hyperparameter $\algo(D) = \hpset_{j^*} \in \hpset$. However, from the perspective of an adversary, in our scenario the server, the output consists of much more. Since clients use secure summation to vote for each hyperparameter individually, the adversary actually learns $\algo(D) = \tilde{\votingvec} \in \mathbb{R}^\numhps$.  

So when analyzing the privacy guarantees of our algorithm, we have to take into account that the adversary can observe the entire noisy voting vector $\tilde{\votingvec}$, i.e., the voting result for every single hyperparameter. To avoid doing the accounting for every single hyperparameter and thereby run into composition, we consider the vector as a whole. The Gaussian mechanism allows us to do this if we consider the L$2$ sensitivity. This is the maximum change in the L$2$ norm of the vector that can happen in the worst case regarding all possible pairs of neighboring data sets. 

\subsection{Proof}
We start by analyzing the L$2$ sensitivity of the voting vector $\votingvec$ without any noise and show that due to the nature of the binary voting system and a fixed number of $\numlocaltops$ votes per client $\ltwosens\votingvec=\sqrt{2\numlocaltops}$. 
\begin{lemma}[L$2$-Sensitivity of $\votingvec$]
    Given the fixed number of local votes $\numlocaltops$, then the voting vector $\votingvec$ without any noise has L$2$-sensitivity $\ltwosens \votingvec = \sqrt{2\numlocaltops}$. 
\end{lemma}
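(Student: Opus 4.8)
The plan is to compute the worst-case change in the L$2$ norm of the aggregated noiseless voting vector directly, exploiting the combinatorial structure of the binary votes. First I would reduce the problem to a single client. Under the \emph{Remove-Then-Add} notion, two neighboring data sets $\dataset$ and $\dataset'$ differ only in the data of one client, say client $i$, so that every other client produces an identical local voting vector on $\dataset$ and $\dataset'$. Writing $\votingvec = \sum_{\ell=1}^{\numclients} \votingvec_\ell$ for the aggregate, all terms except the $i$-th cancel in the difference, giving $\votingvec - \votingvec' = \votingvec_i - \votingvec_i'$, where $\votingvec_i$ and $\votingvec_i'$ are the local vectors client $i$ would output on $\dataset_i$ and $\dataset_i'$. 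Thus the sensitivity reduces to bounding $\lVert \votingvec_i - \votingvec_i' \rVert_2$ over all admissible pairs.

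Next I would use that each local vector is a $0/1$ vector with exactly $\numlocaltops$ ones, since every client votes for precisely its top-$\numlocaltops$ candidates. Let $A$ and $B$ be the support sets of $\votingvec_i$ and $\votingvec_i'$, so $|A| = |B| = \numlocaltops$. The difference $\votingvec_i - \votingvec_i'$ equals $+1$ on $A \setminus B$, equals $-1$ on $B \setminus A$, and is $0$ elsewhere; each nonzero entry has magnitude one. Counting nonzero coordinates then yields the clean identity
\[
\lVert \votingvec_i - \votingvec_i' \rVert_2^2 = |A \setminus B| + |B \setminus A| = 2\numlocaltops - 2\,|A \cap B|.
\]
Finally I would maximize over all valid pairs. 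Since $|A \cap B| \ge 0$, the squared norm is maximized exactly when the two support sets are disjoint, i.e.\ $|A \cap B| = 0$, which is attainable whenever the candidate pool is large enough ($\numhps \ge 2\numlocaltops$). This gives $\lVert \votingvec_i - \votingvec_i' \rVert_2^2 = 2\numlocaltops$ and therefore $\ltwosens \votingvec = \sqrt{2\numlocaltops}$.

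I do not expect a genuine obstacle here, as the argument is essentially a symmetric-difference count. The only point requiring minor care is verifying that the disjoint-support configuration is actually realizable under Remove-Then-Add, namely that replacing client $i$'s data can simultaneously flip all $\numlocaltops$ of its top choices to a disjoint set; this is immediate as long as there are at least $2\numlocaltops$ candidates, which is the relevant regime for a hyperparameter search. It is also worth confirming that the $1/\sqrt{\numclients}$ scaling of the per-client noise in \Cref{alg::main} does not affect this sensitivity computation, since the sensitivity is a property of the noiseless aggregate $\votingvec$ and is what subsequently calibrates the Gaussian mechanism via \Cref{def:multvarigaussian}.
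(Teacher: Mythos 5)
Your proof is correct and follows essentially the same route as the paper's: reduce to the single replaced client under \emph{Remove-Then-Add}, then observe that in the worst case $\numlocaltops$ ones flip to zeros and $\numlocaltops$ zeros flip to ones, giving $\ltwosens\votingvec = \sqrt{2\numlocaltops}$. Your symmetric-difference identity $\lVert \votingvec_i - \votingvec_i' \rVert_2^2 = 2\numlocaltops - 2\,|A \cap B|$ and the explicit attainability condition $\numhps \ge 2\numlocaltops$ are in fact slightly more careful than the paper's argument, which only establishes the upper bound and leaves the equality implicit.
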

\begin{proof}
    Let $\votingvec \in \mathbb{R}^\numhps$ be the aggregated voting vector without any noise and denote \tool{} as $\algo$. Further, let $\numhps$ denote the number of hyperparameters, $\numclients$ the number of clients and let $\votingvec_i[j] \in \{0, 1\}$ be the individual vote of client $i$ for hyperparameter $j$ before adding noise. In abuse of notation, we write $\votingvec, \votingvec'$ when based on $D,D'$, respectively.  
    
    \begin{align*}
        \ltwosens \votingvec &= \max_{\dataset \sim \dataset'} || \algo(\dataset) - \algo(\dataset') ||_2 
        \\
        &= \max_{\dataset \sim \dataset'} || \votingvec - \votingvec' ||_2 
        =  \max_{\dataset \sim \dataset'} \sqrt{\sum_{j=1}^\numhps \sum_{i=1}^\numclients (\votingvec_i[j] - \votingvec'_i[j])^2} 
        \\
        &\overset{(a)}{=} \max_{\dataset \sim \dataset'} \sqrt{\sum_{j=1}^\numhps (\votingvec_u[j] - \votingvec'_u[j])^2} 
        \\ 
        &\overset{(b)}{=} \max_{\dataset \sim \dataset'} \sqrt{\sum_{j=1}^{2\numlocaltops} (\votingvec_u[j] - \votingvec'_u[j])^2}
        \\
        &\leq \sqrt{\numlocaltops \cdot (-1)^2 + \numlocaltops \cdot (1)^2} = \sqrt{2\numlocaltops}
    \end{align*}
    $(a)$ By the definition of neighboring data sets, the entire data set $\dataset_u$ of a single client $u$ can be replaced.
    $(b)$ As for each client's voting vector $\votingvec$ it holds that $||\votingvec_i||_1 = k$, $2\numlocaltops$ entries in $\votingvec$ change at maximum.  Without loss of generality, we can assume that those entries are the first $2\numlocaltops$. The last inequality comes from the fact that clients can only choose between $0$ and $1$ for each hyperparameter and that in the worst case, $\numlocaltops$ ones flip to zeros and $\numlocaltops$ zeros to ones. 
\end{proof} 
Based on the fact that the output of \tool{} has a bounded L$2$-sensitivity, we can now prove that \tool{} satisfies RDP.
\begin{theorem}
Given $\numlocaltops \in \mathbb{N}$ and let $\alpha > 1$ then \tool{} denoted as $\algo$ satisfies client-level $(\alpha, \privrdpeps)$-RDP with
\[
 \privrdpeps = \infdiv{\algo(\dataset)}{\algo(\dataset')} = \frac{\alpha \numlocaltops}{\sigma^2}
\] for variance $\sigma^2$ and all neighboring data sets $\dataset,\dataset'$.
\end{theorem}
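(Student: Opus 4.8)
The plan is to recognize that, despite the distributed implementation, the object the adversary actually observes is precisely the output of a single (central) Gaussian mechanism applied to the aggregated voting function, so the bound follows from the Gaussian-mechanism RDP guarantee (\Cref{def:multvarigaussian}) combined with the sensitivity computed in the preceding lemma. Concretely, I would first argue that it suffices to bound the Rényi divergence of the noisy aggregate $\tilde{\votingvec}$ rather than of the returned hyperparameter $\hpset_{j^*}$: since $\hpset_{j^*} = \argmax_j \tilde{\votingvec}[j]$ is a deterministic, data-independent function of $\tilde{\votingvec}$, the post-processing property of RDP transfers any divergence bound on $\tilde{\votingvec}$ to $\algo(\dataset)$. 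This matches the proof outline's observation that the server-side adversary learns the entire vector $\tilde{\votingvec} \in \mathbb{R}^\numhps$.

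Next, I would collapse the distributed noise into a single Gaussian. Each client releases $\tilde{\votingvec}_i = \votingvec_i + \vb{z}_i$ with $\vb{z}_i \sim \mathcal{N}(0, \frac{\noisemultiplier^2}{\numclients}\mathbf{I}_\numhps)$, and secure summation reveals only $\tilde{\votingvec} = \sum_{i=1}^\numclients \votingvec_i + \sum_{i=1}^\numclients \vb{z}_i$. Because the sum of $\numclients$ independent normal vectors of variance $\noisemultiplier^2/\numclients$ is normal with variance $\numclients \cdot (\noisemultiplier^2/\numclients) = \noisemultiplier^2$ — exactly the simulated-central-scenario fact stated in the preliminaries — I can write $\tilde{\votingvec} = \votingvec + \vb{z}$ with $\vb{z} \sim \mathcal{N}(0, \noisemultiplier^2 \mathbf{I}_\numhps)$ and $\votingvec = \sum_i \votingvec_i$ the noiseless aggregate. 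This is literally the Gaussian mechanism of \Cref{def:multvarigaussian} with $f(\dataset) = \votingvec$ and noise scale $\noisemultiplier$.

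Finally, I would plug in the sensitivity. The preceding lemma gives $\ltwosens \votingvec = \sqrt{2\numlocaltops}$ under Remove-Then-Add, so \Cref{def:multvarigaussian} yields $(\alpha, \frac{\alpha (\ltwosens\votingvec)^2}{2\noisemultiplier^2})$-RDP, and substituting $(\ltwosens\votingvec)^2 = 2\numlocaltops$ gives $\privrdpeps = \frac{\alpha \cdot 2\numlocaltops}{2\noisemultiplier^2} = \frac{\alpha \numlocaltops}{\noisemultiplier^2}$, as claimed. I do not expect a serious technical obstacle, since the heavy lifting — the sensitivity bound and the Gaussian-mechanism RDP formula — is already in hand. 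The one point that requires genuine care, and is really the crux of the argument, is the reduction to the central scenario: one must be explicit that secure summation hides the individual $\tilde{\votingvec}_i$ so that the adversary's view is exactly the single aggregate $\tilde{\votingvec}$ and not the per-client vectors. Only then does the single-Gaussian analysis apply and the guarantee become independent of $\numhps$; otherwise one would have to account for $\numclients$ separate noisy releases and the clean bound would break.
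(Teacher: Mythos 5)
Your proposal is correct and follows essentially the same route as the paper: the paper's proof invokes the Gaussian-mechanism RDP guarantee (\Cref{def:multvarigaussian}) with $\ltwosens = \sqrt{2\numlocaltops}$, exactly as you do. The additional steps you make explicit --- post-processing via the argmax and collapsing the per-client noise into a single $\mathcal{N}(0,\noisemultiplier^2\mathbf{I}_\numhps)$ draw under secure summation --- are the same ingredients the paper relies on implicitly via its proof outline and preliminaries, so your write-up is a faithful (and more self-contained) version of the paper's argument.
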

\begin{proof}
    The proof follows directly from \Cref{def:multvarigaussian} by setting $\ltwosens = \sqrt{2\numlocaltops}$.
\end{proof}
The noise scale $\noisemultiplier$ used in line $9$ of \Cref{alg::main} can then be determined based on \Cref{th::rdpToAdp}. 
To obtain $(\priveps,\privdelta)$-DP, \Cref{th::rdpToAdp} can be used.

\section{Utility Analysis}
\label{sec::utility}

In this section, we first quantify the probability of \tool{} achieving a good compromise and then go into detail about the parameter $\numlocaltops$ by simulating clients under various scenarios. 

\subsection{Selecting a Good Hyperparameter}

When considering the set of hyperparameters provided as input to \tool{}, we first note that most learning tasks do not admit a single hyperparameter that performs well while all others perform poorly. Instead, there typically exists a subset of candidates that yield good utility, though this subset is not too large, otherwise hyperparameter search would be trivial and random guessing would suffice. The objective of \tool{} is therefore to select one of these good hyperparameters such that a compromise is achieved across a majority of clients. However, because differential privacy requires that each client perturbs their local voting vector $\votingvec_i$, there remains the possibility that the added noise results in selecting a bad hyperparameter, particularly under tight privacy budgets. To formalize this, we denote by $\hpset_\mathrm{good} \subseteq \hpset$ the set of hyperparameters with good utility, and by $\hpset_\mathrm{bad} \subseteq \hpset \setminus \hpset_\mathrm{good}$ the set of bad ones. Importantly, it is not necessary to return exactly the hyperparameter with the most votes; any hyperparameter with a similar vote count is also acceptable, as it is likely to yield a comparable compromise. We therefore define the gap between the minimum number of votes for any good and the maximum number of votes for any bad hyperparameter based on $\votingvec$, the aggregated votes without noise, as
\begin{equation}
\label{eq:gap}
\gamma \coloneq \min_{\hpset_i \in \hpset_\mathrm{good}} (\votingvec[i]) - \max_{\hpset_j \in \hpset_\mathrm{bad}} (\votingvec[j]).
\end{equation}
Here, the $i$th component of $\votingvec$ corresponds to the total number of votes for hyperparameter $i$ without noise.
Based on this definition, the utility of \tool{}, denoted by $\algo$, can be expressed as $\Pr[\algo(\dataset) \in \hpset_\mathrm{good}]$, i.e., the probability that the algorithm outputs a good hyperparameter, when conditioning on the aggregated votes $\votingvec$. Building on prior work on private federated voting~\cite{feddpvoting}, we adapt their analysis and obtain the following result:

\begin{theorem}[Selecting a Good Hyperparameter]
\label{th:succprob}
Given a data set $\dataset$, a set of hyperparameters $\hpset$ containing a subset of bad candidates $\hpset_\mathrm{bad}$, and defining $\gamma$ as in \Cref{eq:gap}, then conditioned on the aggregated votes, \tool{}, denoted as $\algo$, with noise scale $\noisemultiplier$ outputs a good hyperparameter with probability
\begin{equation*}
\Pr[\algo(\dataset) \in \hpset_\mathrm{good}] \geq 1 - \frac{|\hpset_\mathrm{bad}|\noisemultiplier}{\gamma \sqrt{\pi}} \exp(-\frac{\gamma^2}{4\noisemultiplier^2}).
\end{equation*}
\end{theorem}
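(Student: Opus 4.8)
The plan is to work conditionally on the aggregated noiseless vote vector $\votingvec$, so that $\gamma$ from \Cref{eq:gap} is a fixed quantity and the only randomness is the Gaussian perturbation. As observed in \Cref{sec::prelims}, summing the per-client noise $\mathcal{N}(0,\noisemultiplier^2/\numclients\,\mathbf{I}_\numhps)$ over the $\numclients$ clients yields an aggregate whose coordinates are independent, so the server effectively observes $\tilde{\votingvec}[j] = \votingvec[j] + Z_j$ with $Z_j \sim \mathcal{N}(0,\noisemultiplier^2)$ i.i.d.\ across $j \in [\numhps]$. The event that $\algo$ fails is exactly that $\argmax_j \tilde{\votingvec}[j] \in \hpset_\mathrm{bad}$. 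The first key step is a reduction: if the argmax is attained at some bad index $b$, then in particular $\tilde{\votingvec}[b] \ge \tilde{\votingvec}[g_0]$ for any single fixed good reference index $g_0 \in \hpset_\mathrm{good}$. Hence the failure event is contained in $\{\exists\, b \in \hpset_\mathrm{bad}: \tilde{\votingvec}[b] \ge \tilde{\votingvec}[g_0]\}$, and a union bound over the bad candidates gives $\Pr[\algo(\dataset) \in \hpset_\mathrm{bad}] \le \sum_{b \in \hpset_\mathrm{bad}} \Pr[\tilde{\votingvec}[b] \ge \tilde{\votingvec}[g_0]]$.

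I would choose $g_0$ to attain $\min_{\hpset_i \in \hpset_\mathrm{good}} \votingvec[i]$, so that by the definition of $\gamma$ the noiseless margin satisfies $\votingvec[g_0] - \votingvec[b] \ge \gamma$ for every bad $b$. For a fixed pair, $\tilde{\votingvec}[b] - \tilde{\votingvec}[g_0] = (\votingvec[b] - \votingvec[g_0]) + (Z_b - Z_{g_0})$ is Gaussian with mean at most $-\gamma$ and variance $2\noisemultiplier^2$, since the two noise coordinates are independent. Therefore each summand is at most $\Pr[\mathcal{N}(0, 2\noisemultiplier^2) \ge \gamma]$, and applying the standard Gaussian (Mills-ratio) tail bound $\Pr[\mathcal{N}(0,\tau^2) \ge t] \le \frac{\tau}{t\sqrt{2\pi}}\exp(-t^2/(2\tau^2))$ with $\tau^2 = 2\noisemultiplier^2$ and $t = \gamma$ yields exactly $\frac{\noisemultiplier}{\gamma\sqrt{\pi}}\exp(-\gamma^2/(4\noisemultiplier^2))$ per term. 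Summing over the $|\hpset_\mathrm{bad}|$ bad candidates and passing to the complement then gives the claimed lower bound on $\Pr[\algo(\dataset) \in \hpset_\mathrm{good}]$.

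The routine parts are the tail estimate and the union bound; the step requiring the most care is the reduction itself, namely ensuring the implication runs in the correct direction so that the failure event is genuinely a subset of the union of pairwise comparisons against a \emph{single} reference $g_0$ that simultaneously dominates all bad candidates. This is exactly why the pessimistic $\min$-over-good minus $\max$-over-bad definition of $\gamma$ is the right quantity to use. I also expect the variance bookkeeping to be the easiest place to slip: the comparison is between two noisy coordinates, so the relevant variance is $2\noisemultiplier^2$ rather than $\noisemultiplier^2$, and this doubling is precisely what produces the exponent $-\gamma^2/(4\noisemultiplier^2)$ and the prefactor $\noisemultiplier/(\gamma\sqrt{\pi})$. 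Finally, note that the bound is only meaningful when $\gamma > 0$, which is both the regime in which the Gaussian tail bound applies cleanly and the regime in which a good compromise actually dominates the vote.
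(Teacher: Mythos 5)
Your proposal is correct and follows essentially the same route as the paper's own proof: model the aggregated noise as $\mathcal{N}(0,\noisemultiplier^2\mathbf{I}_\numhps)$ via secure summation, reduce the failure event to pairwise comparisons against the minimum-vote good candidate $j^* = \argmin_{\hpset_j \in \hpset_\mathrm{good}} \votingvec[j]$ (your $g_0$), note that the difference of two independent noise coordinates is $\mathcal{N}(0,2\noisemultiplier^2)$, apply the Mills-ratio tail bound, and finish with a union bound over $\hpset_\mathrm{bad}$. Your write-up is in fact slightly more careful than the paper's in making the containment of the failure event explicit and in flagging that the bound is only meaningful for $\gamma > 0$.
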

\begin{proof} 
The proof follows by modeling the aggregated noisy voting vector as $\tilde{\votingvec} = \votingvec + \vb{z}$ with $\vb{z} \sim \mathcal{N}(0,\noisemultiplier^2\mathbf{I}_\numhps)$, leveraging the guarantees of the secure summation protocol. If \tool{} outputs a bad hyperparameter, then $\vb{z}[i] - \vb{z}[j^*] \geq \gamma$ for some $\hpset_i \in \hpset_\mathrm{bad}$ and $j^* = \argmin_{\hpset_j \in \hpset_{\mathrm{good}}} \votingvec[j]$. Using the fact that $(\vb{z}[i] - \vb{z}[j^*]) \sim \mathcal{N}(0,2\noisemultiplier^2)$ and using a standard Gaussian tail bound, we get $1 - \Phi_{2\noisemultiplier^2}(\gamma) \leq \frac{\noisemultiplier}{\gamma\sqrt{\pi}}e^{-\frac{\gamma^2}{4\noisemultiplier^2}}$ where $\Phi$ denotes the CDF of the normal distribution with variance $2\noisemultiplier^2$. Applying a union bound over all bad candidates concludes the proof. 
\end{proof}

Importantly, the event that the noise overcomes $\gamma$ at least once is a necessary condition that must occur whenever \tool{} outputs a bad hyperparameter, though it is not a sufficient condition on its own. Thus, we can safely use this event in the proof to upper-bound the failure probability and derive a lower bound for \tool{}'s success probability. 
An immediate consequence of this analysis is that the number of clients does not affect the noise magnitude due to secure summation. Furthermore, the probability of \tool{} selecting a good hyperparameter grows exponentially with the gap $\gamma$, which matches intuition: When the separation between good and bad hyperparameters is large, noise unlikely reverses the outcome, even under limited privacy budgets. Conversely, the success probability decreases as the number of bad hyperparameters grows, since each additional candidate introduces another chance for noise to shift the outcome incorrectly. Note that $\noisemultiplier$ is directly influenced by $\numlocaltops$ as $\numlocaltops$ contributes to the sensitivity of $\votingvec$, which is why the trade-off between privacy and utility is also visible indirectly in this bound. 

\subsection{The Effect of the Maximum Number of Local Votes}
We now turn to the role of the parameter $\numlocaltops$ in \tool{}. Intuitively, $\numlocaltops$ influences the utility of the algorithm but, as shown in \Cref{sec::privacyproof}, it also directly correlates with privacy leakage, thereby introducing a privacy–utility trade-off. In the case of iid data distributions, the choice of $\numlocaltops$ has little impact, since a majority of clients will naturally agree on a specific hyperparameter, and a single vote per client would suffice. In such settings, the gap $\gamma$ from \Cref{eq:gap} is typically close to $\numclients$, which ensures robustness even under small privacy budgets. The situation becomes more interesting in non-iid settings, where distinct subpopulations of clients may prefer different hyperparameters. In this case, a hyperparameter representing a compromise might not receive the highest number of votes but could still perform well overall, for example by being the second- or third-best choice for many clients. The parameter $\numlocaltops$ is designed to capture precisely these scenarios by allowing clients to vote for multiple good candidates rather than only their top choice. However, the effectiveness of this mechanism depends heavily on the underlying data distributions and client-specific losses, making it difficult to provide general theoretical guarantees.

To illustrate the impact of $\numlocaltops$, we perform simulations under controlled conditions. We assume that the local losses of good and bad hyperparameters follow normal distributions with means $\mu_1=0$ and $\mu_2=1$, respectively, and equal standard deviation $\sigma_{\mathrm{loss}}$. Each client draws $|\hpset|$ local losses, $|\hpset_\mathrm{good}|$ good and $|\hpset_\mathrm{bad}|$ bad ones. The federated voting part of \tool{} is executed on the local losses without local training. After $5000$ repetitions, we estimate the success probability from \Cref{th:succprob} by counting how often \tool{} selects a hyperparameter from $\hpset_\mathrm{good}$ and divide it by the total number of repetitions. The experimental setup consists of $\numclients=250$ clients in total and varying values of $\numlocaltops$. In addition, we vary the privacy budget $\priveps \in \{0.25,1\}$ while fixing $\privdelta = 10^{-5}$. We choose such small privacy budgets on purpose as most effects of \tool{} like the privacy-utility trade-off only become visible for $\priveps \leq 1$. These simulations aim to provide empirical evidence of how $\numlocaltops$ impacts the privacy-utility trade-off in iid and non-iid regimes. 

\paragraph{IID vs. Non-IID}

\begin{figure}[t]
\centering

\subfloat[{\textbf{Influence of $\mathbf{\sigma_{\mathrm{loss}}}$ for $\mathbf{|\hpset_\mathrm{good}|=5}$ and $\mathbf{|\hpset|=100}$}}]{
  \includegraphics[width=\linewidth]{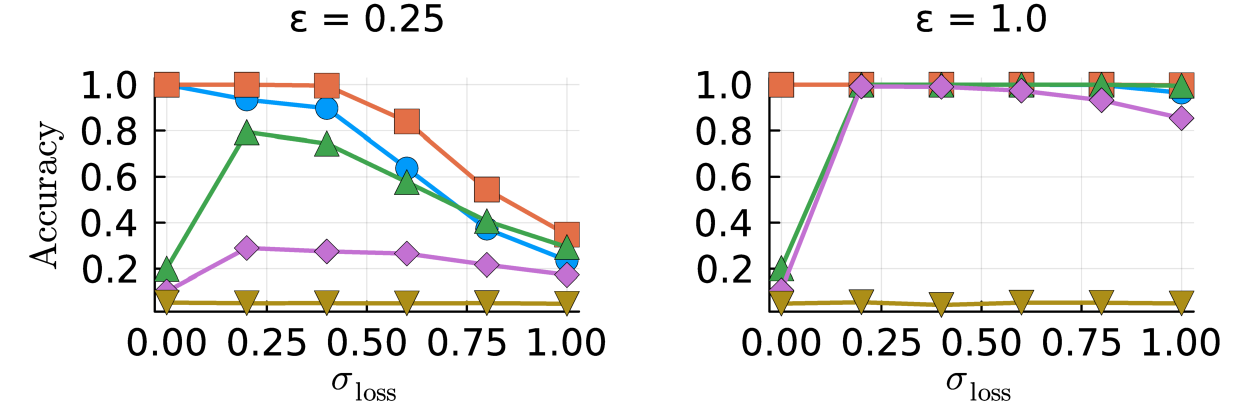}
  \label{fig:ksim_sigma}
}

\par

\subfloat[{\textbf{Influence of $\mathbf{|\hpset_\mathrm{good}|}$ for $\mathbf{\sigma_{\mathrm{loss}}=0.2}$ and $\mathbf{|\hpset|=100}$}}]{
  \includegraphics[width=\linewidth]{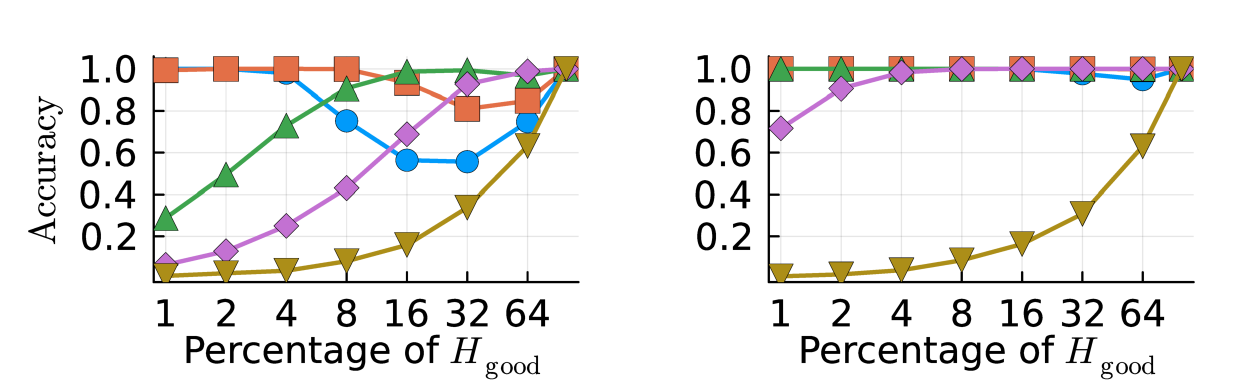}
  \label{fig:ksim_hgood}
}

\par

\subfloat[{\textbf{Influence of $\mathbf{|\hpset|}$ for $\mathbf{|\hpset_\mathrm{good}|=5}$ and $\mathbf{\sigma_{\mathrm{loss}}=0.2}$}}]{
  \includegraphics[width=\linewidth]{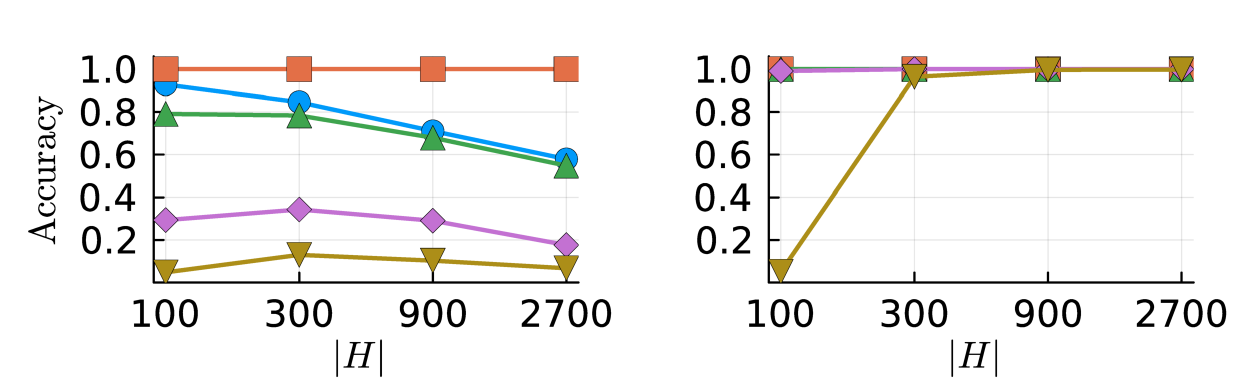}
  \label{fig:ksim_numhps}
}

\par

\subfloat{
  \includegraphics[width=\linewidth]{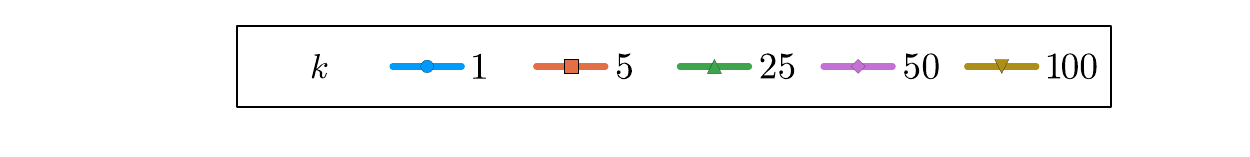}
}

\caption{Simulating (a) a varying overlap between good and bad losses with $|\hpset_\mathrm{good}|=5,|\hpset|=100$, (b) a varying percentage of good hyperparameters with $\sigma_{\mathrm{loss}}=0.2,|\hpset|=100$ and (c) a varying number of hyperparameters in total with $|\hpset_\mathrm{good}|=5,\sigma_{\mathrm{loss}}=0.2$. All under varying privacy budgets $\priveps$ with $\privdelta=10^{-5}$ and $\numclients=250$. We sample $|\hpset_\mathrm{good}|$ and $|\hpset_\mathrm{bad}|$ local losses ($|\hpset|$ in total) from $\mathcal{N}(0,\sigma_{\mathrm{loss}}^2)$ and $\mathcal{N}(1,\sigma_{\mathrm{loss}}^2)$, respectively. To show the impact of $\numlocaltops$ on utility, we measure the accuracy of \tool{}, i.e., how often it selects a loss pointing to a good hyperparameter.
A varying $\sigma_{\mathrm{loss}}$ creates different degrees of overlapping of both loss distributions, which uncovers the advantage of $\numlocaltops>1$. Varying $\hpset_\mathrm{good}$ reveals that smaller $\numlocaltops$ are favorable in case of a small number of good hyperparameters. Lastly, increasing the set of hyperparameters while leaving $|\hpset_\mathrm{good}|$ fixed, shows that the privacy-utility trade-off is only marginally influenced by the number of hyperparameters.}
\label{fig:ksim}
\end{figure}

We begin by studying the effect of varying the degree of non-iid, while deferring the analysis of different sample sizes and class distributions per client to the evaluation section. To simulate this setting, we increase the variance $\sigma_\mathrm{loss}^2$ of the local loss distributions gradually, causing the losses of good and bad hyperparameters to overlap more. In other words, the losses of good and bad hyperparameters become harder to distinguish, which effectively simulates scenarios where clients do not vote unanimously. We fix the number of good hyperparameters to $|\hpset_\mathrm{good}|=5$, and the results are reported in \Cref{fig:ksim_sigma}. The experiments show that larger values of $\numlocaltops$ improve accuracy when the overlap between good and bad hyperparameters increases. However, as $\numlocaltops$ grows too large, the privacy–utility trade-off becomes apparent: although a higher $\numlocaltops$ may capture more compromise solutions, it also induces stronger noise and forces clients to vote for bad hyperparameters when good ones are scarce. This effect is most pronounced for $\numlocaltops=100$, where accuracy drops nearly to zero since only five hyperparameters are good and their votes are overwhelmed by those for bad candidates. Importantly, the privacy–utility trade-off manifests only for very small values of $\priveps$, which demonstrates that \tool{} can still achieve high accuracy under tight privacy budgets.

\paragraph{Fraction of Good Hyperparameters}
Next, we investigate how the number of good hyperparameters influences the performance of \tool{}. To this end, we fix $\sigma=0.2$ to simulate a more practical scenario without only unanimous votes and vary the number of good hyperparameters as $|\hpset_\mathrm{good}| = 2^i$ for $i \in [7]$. The results are shown in \Cref{fig:ksim_hgood}. The findings confirm that the privacy–utility trade-off becomes relevant only for very small values of $\priveps$. Furthermore, for $\numlocaltops=100$ the accuracy does not exceed random guessing (appearing as a flat curve rather than a diagonal due to the log scale), which is expected since clients are forced to vote for all hyperparameters, leaving the server’s argmax decision dominated by noise. For intermediate choices of $\numlocaltops$, the privacy–utility trade-off becomes visible. In addition, larger choices of $\numlocaltops$ only perform well for a growing $|\hpset_\mathrm{good}|$. This is because if the number of good hyperparameters is much smaller than $\numlocaltops$ then clients are forced to vote for bad hyperparameters. 
Also note the dip observable around $\numlocaltops = 1$ with $|\hpset_\mathrm{good}| > 4$ and slightly smaller for $\numlocaltops = 5$ with $|\hpset_\mathrm{good}| > 16$. If there are a lot of good hyperparameters then the votes are distributed among these candidates, which in turn decreases the maximum votes for a single hyperparameter and thus decreases accuracy. The accuracy only increases again when bad hyperparameters are becoming rarer.

\paragraph{Total Number of Hyperparameters}
Lastly, we study how the total number of hyperparameter candidates affects performance by varying $|\hpset| \in \{100, 300, 900, 2700\}$ while keeping $|\hpset_\mathrm{good}|=5$ and $\sigma_\mathrm{loss}=0.2$ fixed. The results are shown in \Cref{fig:ksim_numhps}. For privacy budget $\priveps=1$, only $\numlocaltops=100$ is noticeably affected, because forcing clients to vote across essentially all candidates becomes close to random guessing, yielding an expected accuracy near $5/100=0.02$. For smaller privacy budgets, a smaller $\numlocaltops$ is preferable since fewer votes require less noise, and the results clearly indicate that the privacy–utility trade-off is largely independent of the total number of hyperparameters. Moreover, for $\numlocaltops>5$ the accuracy initially worsens up to $|\hpset|=100$, because many clients spend a large fraction of their votes on the same few bad candidates, increasing the probability that a bad candidate is selected. This effect vanishes as $|\hpset|$ grows and votes for bad candidates become more dispersed. The remaining slight downward trend in accuracy for larger $|\hpset|$ is due to an increased chance that, as more local losses are sampled, some bad candidate attains a loss close to that of a good candidate, making local ranking and thus voting more error-prone. Interestingly, if $\numlocaltops$ aligns with $|\hpset_\mathrm{good}|$ then the accuracy does not decrease at all. 
However, beyond some point, as $|\hpset|$ grows, it becomes increasingly likely that a poorly performing configuration attains a bad loss that falls within the range of the good losses. As a result, the accuracy of $\tool{}$ eventually decreases.

\paragraph{Selecting $\numlocaltops$}
\tool{} is deliberately designed to minimize the number of its own hyperparameters, since in differentially private hyperparameter search each additional parameter would require tuning, consuming part of the privacy budget and leaving less available for the actual search and learning task. Our simulations show that the choice of $\numlocaltops$ becomes critical only in very small privacy budget regimes and that choosing a $\numlocaltops$ that is slightly off the potential best choice does not break \tool{} but only decreases utility slightly. \Cref{fig:ksim} clearly shows that as long as $\numlocaltops$ is small \tool{} has a high probability of selecting a good hyperparameter. 

\section{Evaluation}
\label{sec::eval}
In this section, we provide further insights on how \tool{} is implemented in the federated learning framework Flower, and present our empirical evaluation on benchmark data sets in iid and non-iid scenarios. These experiments are focused on the privacy-utility trade-off and the resource consumption of \tool{}. Additional ablations are moved to \Cref{app:rawlosses,app:featherscomp,app:parabaseline} due to space limitations.

\subsection{Flower Implementation}
In general, Flower~\cite{beutel2020flower} simulates each client as a separate process, eliminating the need for managing many physical devices for clients. The server performs multiple iterations, and in each iteration a so-called strategy manages the configuration of clients, calls clients when their participation is needed, and processes the clients' results. Each client locally trains on their local share of data and sends their results to the server.
We implement \tool{} as such a strategy, a submodule in Flower, and we implement clients to be compatible with other learning tasks in addition to the ones we evaluate here.
\tool{} outputs a hyperparameter configuration that is then passed to the subsequent federated learning task. Inside Flower, \tool{} is implemented completely using PyTorch for model training. The source code of our implementation is publicly available at \href{https://github.com/UzL-PrivSec/dp-hype}{\url{https://github.com/UzL-PrivSec/dp-hype}}.

\subsection{Experimental Setup}

\paragraph{Data Sets and Learning Tasks}

We use the following data sets to evaluate the performance of \tool{}.
The MNIST~\cite{lecun2010mnist} data set which contains gray-scale images of handwritten digits, Cifar-$10$~\cite{krizhevsky2009learning}, also an image data set, but with three color channels and images containing objects and animals, and Adult~\cite{data1996adult}, which is of tabular form with a mix of categorical and continuous census attributes. An overview of the data sets and their properties is shown in \Cref{tbl::datasets}.
For MNIST we use a small convolutional model and for Cifar-$10$ a slightly larger one, to maintain a reasonable balance between accuracy and training time. In both cases, data are standardized, and the models are trained on a classification task to differentiate between $10$ classes.
For Adult, the data is also standardized, categorical attributes are one-hot-encoded and the attribute income is mapped into a binary attribute indicating whether income is above $50,000$. We then train a small MLP with dropout to learn the binary classification task for the attribute income. We refer to~\Cref{app:architectures} for details on model architectures. In all learning tasks, we use a train-test-split, such that $20\%$ of the data are used for testing and the rest for training. The training was performed on an NVIDIA A100-20C GPU with $20$GB VRAM by using the simulation module of Flower to train clients in parallel.

\paragraph{Hyperparameters}
We use a specific set of hyperparameters to mirror a realistic setup for training a neural network via SGD, we focus on the three parameters learning rate ($10$ values), learning rate decay ($5$ values) and momentum ($2$ values). The cross-product of candidates for all the parameters results in a total number of $100$ different hyperparameters. An overview of the individual candidates can be seen in \Cref{tbl::hps}. Smaller learning rates are chosen on purpose to challenge \tool{} with a lot of bad hyperparameters that attain model accuracy at best as good as guessing. The remaining learning rates in combination with the other parameters lead to a few excellent hyperparameters and a medium-sized number of mediocre performing hyperparameters.  We explicitly avoid adopting the same setup as prior work, because oftentimes the hyperparameter set consists of only a few candidates in total. On the one hand, this is a priori beneficial for privacy-preserving algorithms as the number of potential compositions remains low and on the other hand, a large fraction of good candidates makes randomly guessing the right hyperparameter a viable strategy.
\begin{table}[t]
\caption{An overview of the three evaluated data sets including the number of data points used for training, the number of attributes and the number of classes.}
\label{tbl::datasets}
\centering
\begin{tabular}{@{}lccc@{}}
\toprule
\textbf{Data Set} & \textbf{\# Data Points} & \textbf{Data Point Size} & \textbf{\# Classes} \\
\midrule
MNIST~\cite{lecun2010mnist} & $60{,}000$ & $28\times28$ (grayscale) & $10$ \\
Cifar-10~\cite{krizhevsky2009learning} & $60{,}000$ & $32\times32$ (RGB) & $10$ \\
Adult~\cite{data1996adult} & $\approx 50{,}000$ & 14 (mixed attributes) & $2$ \\
\bottomrule
\end{tabular}
\end{table}

\begin{table}[t]
\caption{The set of hyperparameters to be evaluated is the cross-product of $10$ learning rate, $5$ learning rate decay and $2$ momentum candidates, resulting in $100$ candidates in total.}
\label{tbl::hps}
\centering
\begin{tabular}{@{}ll@{}}
\toprule
\textbf{Hyperparameter} & \textbf{Candidates} \\
\midrule
Learning rate & $\{0.5,\,0.1,\,0.05,\,1\times10^{-3},\,5\times10^{-3},\,1\times10^{-5},$ \\
& $1\times10^{-6},\,5\times10^{-6},\,5\times10^{-7},\,1\times10^{-7}\}$ \\
Learning rate decay & $\{0.0,\,0.1,\,0.25,\,0.99,\,1.0\}$ \\
Momentum & $\{0,\,0.9\}$ \\
\addlinespace
\midrule
\textbf{Total combinations} & $\mathbf{10\cdot5\cdot2=100}$ \\
\bottomrule
\end{tabular}
\end{table}

\paragraph{Evaluated Algorithms}
As a first non-private baseline, which we call \opt{}, we investigate the accuracy of the best-performing hyperparameter.
For every hyperparameter candidate we train a model using federated learning and then select the best-performing hyperparameter as the optimal baseline. As another baseline and for lower bounding expected results of our \tool{}, we also report the average of accuracies over all candidates of hyperparameters which we call \randguess{}. This average represents the expected accuracy one would get from randomly selecting a hyperparameter, a strategy that does not have any privacy leakage since it is independent of sensitive data.
We then present all candidates to \tool{} and allow it to find a good compromise across all clients in a federated manner. The clients' local loss function is set to the standard accuracy metric of their model. As for the maximum number of local votes, we set it to $\numlocaltops=5$ as we observe that it works well across all data sets, so it does not need to be tuned based on the specific data set. So every client votes for the five hyperparameter candidates attaining the highest accuracy on their local model. With the hyperparameter that attains the most noisy votes, we then train the global model in a federated manner as well to assess the performance of the hyperparameter choice of \tool{}. For performance metrics, we report the average test accuracy and the corresponding $95\%$ confidence interval resulting from $20$ runs. Every federated training is based on FedAvg~\cite{fedavg} and the number of local as well as federated training runs is set to $5$ with a batch size of $64$. Except for Cifar-$10$ where $10$ epochs are performed locally and globally. We specifically avoid private training, using DP-SGD \cite{abadi2016deep} for instance, to evaluate the performance of hyperparameters, as this would add much more noise to the training pipeline and introduce additional bias to the results. Also, this allows us to better control a priori that a certain number of hyperparameters results in bad model performance to further challenge our algorithm. 

\begin{figure}[t]
    \centering
    \includegraphics[width=1.0  \linewidth]{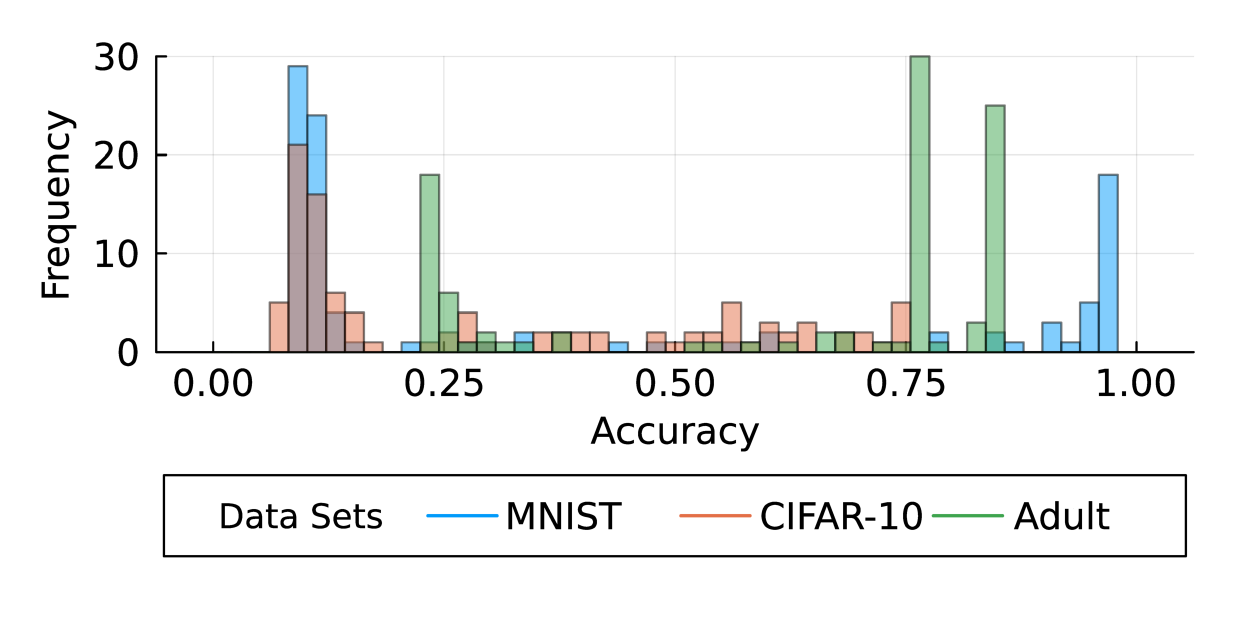}
    \caption{Histogram over individual accuracy values when training a global model on each data set using SGD on the set of hyperparameters shown in \Cref{tbl::hps}. As expected, the accuracies vary drastically depending on the data set.}
    \label{fig:lossdist}
\end{figure}

\begin{figure*}[t]
\centering

\subfloat[{\textbf{MNIST data set}}]{
  \includegraphics[width=0.95\textwidth]{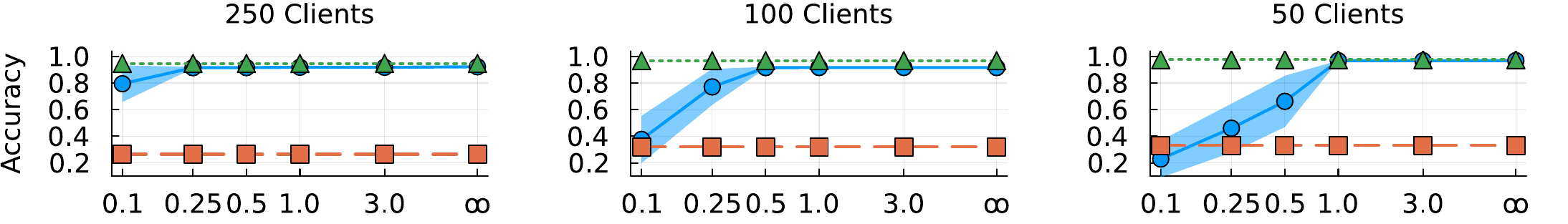}
  \label{fig:evaliidmnist}
}

\par

\subfloat[{\textbf{Cifar-$\mathbf{10}$ data set}}]{
  \includegraphics[width=0.95\textwidth]{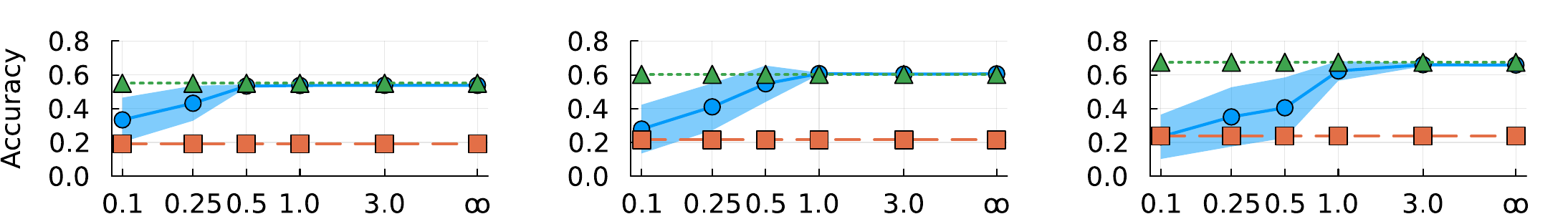}
  \label{fig:evaliidcifar10}
}

\par

\subfloat[{\textbf{Adult data set}}]{
  \includegraphics[width=0.95\textwidth]{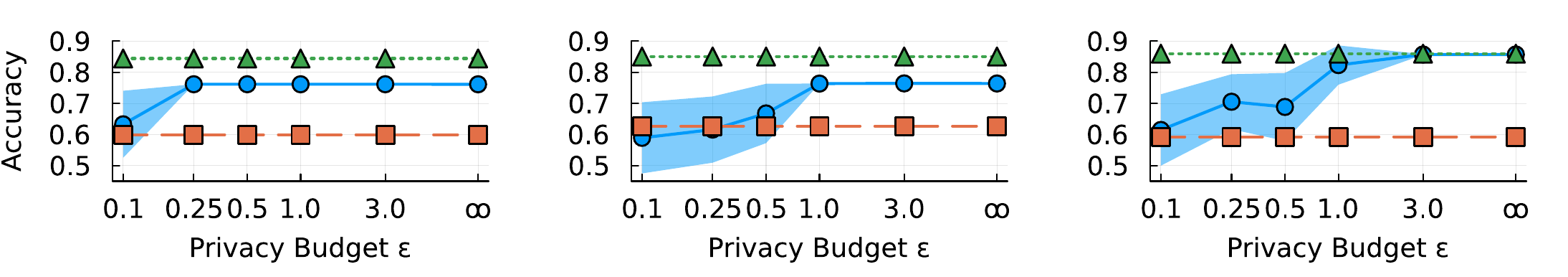}
  \label{fig:evaliidadult}
}

\par

\subfloat{
  \includegraphics[width=0.95\textwidth]{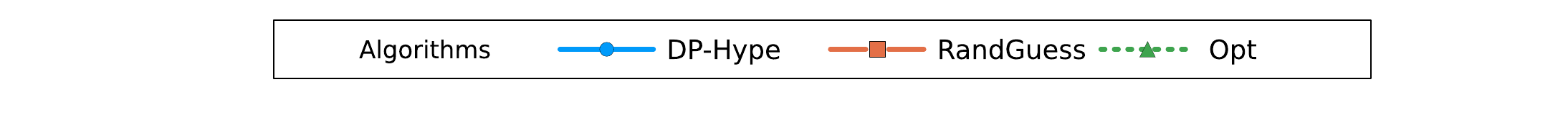}
}

\caption{Empirical results in terms of Accuracy for evaluating \tool{}, \randguess{} and \opt{} on the data sets MNIST, Cifar-10 and Adult for a varying number of clients and different privacy budgets. It can be seen that the set of hyperparameters contains a lot of bad candidates as \randguess{} is far away from \opt{}. Yet, \tool{} is very close to the best possible result even for small privacy budgets and this effect gets even stronger with a growing number of clients.}
\label{fig:evaliid}
\end{figure*}

Concerning a comparison with related work, the privacy\hyp preserving federated hyperparameter tuning that comes closest to our work is Feathers \cite{seng2022feathers}. Feathers selects hyperparameters based on loss differences between hyperparameter candidates. A closer inspection reveals that Feathers does not preserve differential privacy. While it adds noise to the loss values, it does so without applying clipping, which is necessary when dealing with unbounded quantities. Moreover, Feathers performs multiple iterations that are not included in its privacy accounting. 
Accounting for all iterations severely increases the leakage, making it impossible for Feathers to maintain competitive utility under strict privacy budgets. We compare \tool{} against a version of Feathers where we corrected the privacy accounting in \Cref{app:featherscomp}.

\begin{figure*}[t]
\centering

\subfloat[{\textbf{MNIST data set}}]{
  \includegraphics[width=0.95\textwidth]{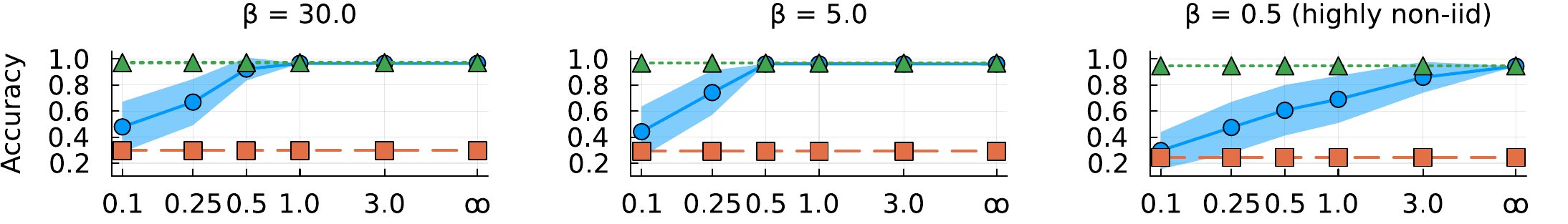}
  \label{fig:evalnoniidmnist}
}

\par

\subfloat[{\textbf{Cifar-$\mathbf{10}$ data set}}]{
  \includegraphics[width=0.95\textwidth]{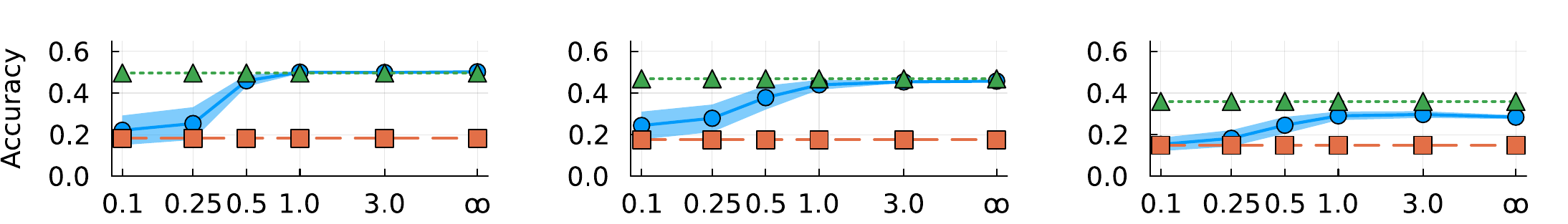}
  \label{fig:evalnoniidcifar10}
}

\par

\subfloat[{\textbf{Adult data set}}]{
  \includegraphics[width=0.95\textwidth]{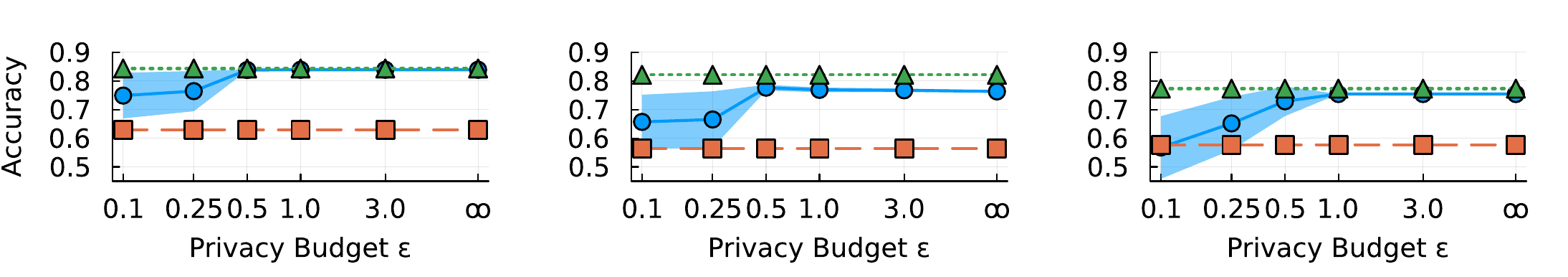}
  \label{fig:evalnoniidadult}
}

\par

\subfloat{
  \includegraphics[width=0.95\textwidth]{pets26/imgs/eval_iid_Legend_allN.pdf}
}

\caption{Empirical results in terms of Accuracy for evaluating \tool{}, \randguess{} and \opt{} on the data sets MNIST, Cifar-10 and Adult for $\numclients=100$ and different privacy budgets in the non-iid scenario. The degree of non-iid, denoted as $\dirichletparam$, increases from left to right, which means that the data heterogeneity increases as well. On MNIST, \tool{} again reaches peak utility, which decreases slightly for the harder $\dirichletparam=0.5$ scenario. On Cifar-$10$ the effect of non-iid scenarios is clear, as the utility of \opt{} shows. Still, \tool{} can uphold a high degree of utility even for those scenarios.}
\label{fig:evalnoniid}
\end{figure*}

\subsection{Privacy-Utility Trade-Off}
\label{sec:eval_privutiltradeoff}
To capture the privacy-utility trade-off of \tool{}, we evaluate the performance in iid as well as non-iid scenarios of \randguess{}, \opt{}, and \tool{} on three benchmark data sets: MNIST, Cifar-$10$, and Adult. For the number of clients, we consider $\numclients \in \{50, 100, 250\}$. For privacy, we fix $\privdelta = 10^{-5}$ while varying the privacy budget $\priveps \in \{0.1, 0.25, 0.5, 1, 3, \infty\}$ resulting in noise scales for the Gaussian mechanism $\sigma^2~\in~\{103, 46, 24, 12.5, 4.7, 0\}$, respectively.

In the iid scenario, each client is assumed to have local training data sampled from the same underlying distribution. To construct this setup, we randomly divide the training data into equal portions such that every client receives the same amount of data and the distribution of labels remains nearly identical across clients. The results are summarized in \Cref{fig:evaliid}.
For non-iid scenarios, clients differ both in the size of their local training data and in the distribution of labels. We simulate such scenarios using a Dirichlet partitioning approach. Specifically, for each label $l$ in the data set, we draw proportions from an $\numclients$-dimensional Dirichlet distribution with concentration parameter $\dirichletparam$, which determines how samples with label $l$ are split across clients. Usually, the letter $\alpha$ is used in the literature for non-iid scenarios, but we use $\dirichletparam$ to avoid a conflict with $\alpha$ from Rényi DP. Once these proportions are fixed, the data set is shuffled and federated accordingly. Smaller values of $\dirichletparam$ result in highly skewed distributions, potentially assigning nearly all samples of a given label to a single client, while larger values of $\dirichletparam$ yield more iid-like partitions. This setup aligns with prior work and effectively combines two common non-iid conditions: label skewness and variation in client data sizes. For our experiments, we investigate $\dirichletparam \in \{30.0, 5.0, 0.5\}$. The results are in~\Cref{fig:evalnoniid} as well as in~\Cref{app:evalnoniid}. We visualize label distribution and size of clients' local data sets in \Cref{app:visualnoniid}. In this setting, all models are trained globally for $5$ training runs.

\paragraph{IID Scenario}
First, we take a closer look at \Cref{fig:evaliid}. The results highlight two key effects: the positive impact of increasing the number of clients and the inherent privacy-utility trade-off. The accuracy improvement due to more clients can be explained by our utility analysis through the parameter $\gamma$, which measures the gap in votes between good and bad hyperparameters. In an iid setting, more clients amplify the votes for strong hyperparameters, making it harder for noise to distort the selection process. This effect is especially visible in the MNIST and Cifar-$10$ results, while on Adult, \tool{} occasionally favors second-best hyperparameters, though the trend remains consistent. As expected, decreasing the privacy budget lowers utility, but this loss can be partially compensated by increasing the number of clients, since more clients produce more votes and thus improve the signal-to-noise ratio.  

\paragraph{Non-IID}
When comparing \Cref{fig:evalnoniidN250,fig:evalnoniid,fig:evalnoniidN50}, increasing the number of clients reduces the number of data points available per client, which explains why the utility of all algorithms decreases. However, it also reduces the noise contributed per client, so \tool{} benefits from larger client populations. A higher degree of non-iidness further decreases utility overall and also changes the privacy–utility trade-off of \tool{}, because votes are spread across more hyperparameter candidates, diluting the evidence for any single choice. It is therefore natural that \tool{} cannot consistently match the optimal utility, as purely local evaluations cannot fully replace global evaluation. In an iid setting, FedAvg performs nearly as well as centralized training on the pooled data, whereas \tool{} may miss globally best candidates because they might not materialize in any client’s local evaluation and thus cannot be selected. In the non-iid setting, FedAvg no longer maintains the same utility and, in addition, higher variance across clients produces more strong candidates. Overall, these results indicate that trading global hyperparameter evaluation for local evaluation yields a substantial privacy improvement while sacrificing some optimality (which also carries over to utility), and that \tool{} remains effective under smaller privacy budgets than typically required in privacy-preserving federated learning tasks.

\sisetup{
  detect-weight = true,
  detect-family = true
}
\begin{table}[t]
\centering
\caption{
Resource consumption of \tool{} for the local hyperparameter evaluation and the federated voting. The former is measured per client individually, and captures the GPU vRAM overhead in MB and the runtime (RT) in seconds for each data set. The voting part shows the individual up- and download bandwidth in KB and the runtime in seconds including all clients. All measurements are averaged over five runs with $\numclients \in \{50,100,250\}$.
}
\label{tab:secagg_ress}
\begin{tabular}{ll S S cr}
\toprule
& \multicolumn{3}{c}{Local HP Evaluation} & \multicolumn{2}{c}{Federated Voting} \\
\cmidrule(lr){2-4}\cmidrule(lr){5-6}
 $\numclients$ & Data set & vRAM & RT & Bandwidth & {RT} \\
\midrule
\multirow{3}{*}{$50$} & MNIST  & 68.29 & 120.22 & \multirow{3}{*}{\commbx{33.78}{29.72}} & \multirow{3}{*}{$16.66$} \\
& Cifar-10 & 199.44 & 131.36 & & \\
& Adult  & 18.29  & 50.52 & &  \\
\midrule
\multirow{3}{*}{$100$} & MNIST  & 68.29  & 63.78 & \multirow{3}{*}{\commbx{61.02}{49.36}} & \multirow{3}{*}{$28.36$} \\
& Cifar-10 & 199.44 & 71.78 & & \\ 
& Adult  & 18.29  & 27.05 & & \\
\midrule
\multirow{3}{*}{$250$} & MNIST  & 68.29  & 26.53 & \multirow{3}{*}{\commbx{147.30}{111.38}} & \multirow{3}{*}{$193.13$} \\
& Cifar-10 & 199.44 & 29.51 & & \\
& Adult  & 18.31  & 12.74 & & \\
\bottomrule
\end{tabular}
\end{table}

\subsection{Resource Consumption}
We also evaluate the resource consumption of \tool{} in terms of GPU memory footprint, bandwidth and runtime. Overall, the computational cost decomposes naturally into two phases: (i) local training and evaluation performed independently by each client for every considered hyperparameter, and (ii) a federated, privacy-preserving voting step, which requires a single invocation of a secure summation protocol.

For the local hyperparameter search, we report the per-client GPU memory consumption (in MB) by using the PyTorch built-in function \emph{torch.cuda.max\_memory\_allocated},
and the wall-clock runtime (in seconds). For the voting phase of \tool{}, we use the \textsc{Flower} implementation of SecAgg++~\cite{secsum_bell}. The protocol exposes two main parameters: the number of shares (i.e., the number of secret shares created per client) and the reconstruction threshold, which specifies how many shares are required to reconstruct an encoded value. We set the number of shares to $0.3 \numclients$, which provides reasonable security. While a logarithmic number of shares would suffice in principle, for $\numclients < 1000$ this would amount to only a few shares. Since we do not consider client dropouts in this setting, we set the reconstruction threshold equal to the number of shares (dropout-tolerance is still possible with \tool{}, but is not evaluated here). We fix the voting vector size to $100$, consistent with the other experiments, and measure per-client upload and download traffic (in KB) as well as the overall runtime (in seconds). These experiments are conducted five times in the iid setting for $\numclients \in \{50, 100, 250\}$ across all three data sets, and the results are shown in~\Cref{tab:secagg_ress}.

We observe that both GPU RAM consumption and runtime are smallest for Adult, followed by MNIST, and highest for Cifar-10. This trend is expected, as both model complexity and input dimensionality increase in this order. Moreover, as the number of clients increases, the GPU RAM consumption remains constant while the overall runtime decreases substantially. The former follows directly because model complexity and batch size do not depend on $\numclients$. The latter is explained by the fact that data points are federated across clients. Hence, more clients imply fewer local training examples per client and, consequently, shorter local training times. In all configurations, the per-client GPU memory usage remains in the MB range and the end-to-end runtime stays within a few minutes. The measurements of the federated voting are independent of the data set, because the size of the voting vector does not change. As expected, SecAgg++ introduces a clear overhead. Nevertheless, the bandwidth overhead per client remains within KB, and the runtime increases only to slightly above three minutes even for $\numclients=250$. This efficiency stems from the fact that the voting phase communicates only a single vector per client.

\section{Related Work}
\label{sec::relatedwork}

\paragraph{Centralized DP Hyperparameter Search}
In the central setting, the trusted aggregator has access to the entire data set and the adversary can only observe the final output of an algorithm. 
The naive approach would be to pay some part of the privacy budget for each hyperparameter separately. The individual privacy budget can be calculated based on composition theorems. Although these theorems are becoming increasingly tight, the individual privacy budget still decreases roughly in $\sqrt{r}$ with $r$ compositions~\cite{mironov2017renyi}, which is just too small for most privacy-preserving federated learning algorithms~\cite{mcmahan2017learning}. 
There is a line of work that shows how hyperparameter search can be done more efficiently in DP. It starts with the sparse vector technique~\cite{dwork2006differential} that answers queries simply by outputting whether a noisy threshold exceeds the noisy query answer without any influence of the privacy budget on the number of queries. This is extended by~\cite{liu2019private} and the follow-up works~\cite{papernot2021hyperparameter,koskela2023practical}, which show that the search for hyperparameters, with and without a given threshold for utility, can be performed by only paying twice or three times the privacy budget as for a single hyperparameter. However, these techniques cannot be directly applied in a federated scenario. This is because the main privacy amplification comes from random stopping and strict assumptions, such as the adversary can only observe the final output and not intermediate results, which requires a trusted third party.

\paragraph{Federated DP Hyperparameter Search}
Conducting a hyperparameter search on sensitive data in the federated setting is far less studied in contrast to the central setting, as privacy-preserving federated learning by itself is a difficult task.
Some solutions like LDP-DSBO~\cite{chen2024locally} are specialized on specific learning algorithms, which leverage some properties of gradients used to train models. Others like PrivTuna~\cite{mitic2025privacy} focus on simulating the central setting with SMPC by paying the price of high communication overhead that only works with a few clients without real privacy protection as the results are just published without noise.
When it comes to a federated privacy-preserving hyperparameter search in general, there exist essentially two algorithms. First, DP-FTS-DE~\cite{dai2021differentially} for privacy-preserving federated Bayesian optimization and second, Feathers~\cite{seng2022feathers} combining neural architecture and hyperparameter search based on loss differences between hyperparameters. 
However, DP-FTS-DE tackles a different problem than \tool{}, as it aims to help clients with the convergence of local hyperparameters. Moreover, DP-FTS-DE relies on a trusted third party, and there is no guarantee that the hyperparameters of one client work for another client. Additionally, the privacy analysis does not account for the hyperparameters to be published.  
Feathers, on the other hand, comes closest to our approach. However, 
Feathers fails to preserve DP, because it uses the cross-entropy loss with additive noise without any clipping, even though this loss function is unbounded and could theoretically be drastically influenced when exchanging data of a single client. Furthermore, Feathers does not take into account that each hyperparameter that is evaluated increases the leakage. In addition, Feathers does not perform well under realistic assumptions, i.e., small privacy budgets and a set of hyperparameters containing a lot of bad-performing candidates~\cite{seng2022feathers}. 

We intentionally focus on classic hyperparameter search, in contrast to Feathers, which additionally conducts differentiable neural architecture search (NAS). 
NAS is typically studied as a distinct problem setting due to its discrete and structured search space, specialized algorithms, and substantially different compute–accuracy trade-offs~\cite{wang2024advances, benmeziane2024medical}. \tool{} can be extended to NAS treating architectural choices as hyperparameters or additionally conducting differentiable NAS.

\paragraph{Differentially Private Voting}
In the absence of a trusted aggregator or a secure aggregation protocol, local differential privacy (LDP) serves as the standard model to realize voting, requiring each client to enforce privacy individually. Several algorithms build on this notion: \cite{david2023local} evaluate binary voting with well-known mechanisms such as randomized response and RAPPOR; \cite{yan2019dpwevote} proposes a multi-stage protocol based on randomized response; and \cite{xu2024privacy} applies local votes for labeling unlabeled data. Despite their practicality, LDP mechanisms generally suffer from high noise: they require large client populations to average it out, and small privacy budgets can significantly reduce utility.
Similar to \tool{}, other work employs secure aggregation as well to simulate the central model. \cite{zhu2020federated} introduce noiseless voting for heavy hitters,~\cite{bagdasaryan2021towards} construct global histograms to identify frequently visited locations, and~\cite{feddpvoting} leverage differentially private voting to label unlabeled data points based on local contributions. On the other hand, \tool{} extends this voting approach with a top-$\numlocaltops$ mechanism, translates it towards a differentially private hyperparameter search and requires only a single invocation of the secure summation protocol.

\section{Discussion}
\label{sec:discussion}
In a federated setting where each party has a list of items, \emph{heavy hitter protocols} have the goal of selecting those items that occur most often. 
We could in principle deploy heavy hitter protocols to identify the most frequently chosen hyperparameters. Below, we discuss why we decided against deploying existing heavy hitter protocols. In \cref{sec::practicalcons}, we go into detail about the robustness of \tool{} and how malicious parties can be handled. 

We decided against local DP heavy hitter protocols~\cite{zhang2025heavyhitterldp, chadha2024heavyhitterldp, li2024heavyhitterldp, bun2019heavyhittersldp, wang2021heavyhitterldp,bassily2017heavyhittersldp}, to achieve a strong privacy-utility trade-off. LDP protocols do not rely on SMPC, but make the output of each client differentially private. Prior work~\cite{balle2019blanket} shows that they have an additional factor of $\sqrt{\numclients}$ privacy leakage in terms of $\varepsilon$.
In the shuffle model~\cite{balle2019blanket}, this overhead can be reduced, but it assumes a mixing protocol, which again introduces trust assumptions that impede deployment. 

There is a rich body of literature about SMPC-based heavy hitter protocols that rely on a small number (typically $2$ or $3$) of computation parties~\cite{damgard2024privateselection,balle2025heavyhitter,boneh2021heavyhitters,boehler2021heavyhitters} or SMPC-based private selection protocols~\cite{damgard2024privateselection}. 
As these protocols become very inefficient if the number of computation parties increases, this number has to be small in practice. In a multilateral deployment scenario with more than $10$ parties, agreeing on how to set up a handful of computation servers can become a point of friction that significantly delays deployment. In addition, usually at least one of the computation parties has to be a trusted third party.
So, we decided against relying on computation parties and designed \tool{} such that it solely relies on secure summation, as this is a much simpler requirement, for which scalable protocols exist.
However, the modularity of \tool{} allows replacing the secure aggregation part easily if the use of one of the above protocols is required.

\section{Conclusion}
\label{sec::conclusion}
In this work, we introduced \tool{}, a privacy-preserving algorithm to perform a federated hyperparameter search. \tool{} is designed to find a compromise among clients that minimizes the loss of a learning task locally as much as possible. The core idea is to evaluate hyperparameters locally on each client’s private data and let each client be part of a federated voting protocol with $\numlocaltops$ votes per client. Then, the hyperparameter with the most votes is chosen for the learning task. The aggregation of local votes is implemented using a secure summation protocol such that the server only sees the aggregated votes per hyperparameter. Each client adds a small amount of noise to their private voting vector locally such that, in sum, the noise suffices to preserve differential privacy.

Compared to previous work, \tool{} is agnostic to the specific learning task, preserves the strong notion of client-level differential privacy and its privacy guarantees are independent of the total number of hyperparameters.
In addition, we formally quantify the probability that \tool{} outputs a good hyperparameter. This probability converges exponentially to $1$ with a growing gap between votes for good and for bad hyperparameters. We implemented \tool{} in the popular federated learning framework Flower and evaluated it on three benchmark data sets. The results from a simulation and on the data sets demonstrate the high degree of utility that \tool{} is able to uphold even for small privacy budgets, and in iid as well as non-iid scenarios.

\section{Acknowledgement}
This research has been conducted within the AnoMed project (https://anomed.de/)
funded by the BMFTR (German Bundesministerium für Forschung, Technologie und Raumfahrt) and by the European Union – NextGenerationEU.

Throughout this work, we used generative AI-based tools to revise the text, improve flow and correct any typos, grammatical errors, and awkward phrasing. These tools were only used for small editorial purposes, and we inspected all outputs to ensure accuracy and originality. Importantly, all content is our own work and is not based on AI tools.

\bibliographystyle{IEEEtran}
\bibliography{main}

@inproceedings{secsum_bell,
  author       = {James Henry Bell and
                  Kallista A. Bonawitz and
                  Adri{\`{a}} Gasc{\'{o}}n and
                  Tancr{\`{e}}de Lepoint and
                  Mariana Raykova},
  title        = {{Secure Single-Server Aggregation with (Poly)Logarithmic Overhead}},
  booktitle    = {{CCS} '20: 2020 {ACM} {SIGSAC} Conference on Computer and Communications
                  Security, Virtual Event, USA, November 9-13, 2020},
  pages        = {1253--1269},
  publisher    = {{ACM}},
  year         = {2020},
}

@inproceedings{dwork2006differential,
  title={{Differential Privacy}},
  author={Dwork, Cynthia},
  booktitle={International colloquium on automata, languages, and programming},
  pages={1--12},
  year={2006},
  organization={Springer}
}

@inproceedings{mironov2017renyi,
  title={{R{\'e}nyi Differential Privacy}},
  author={Mironov, Ilya},
  booktitle={2017 IEEE 30th computer security foundations symposium (CSF)},
  pages={263--275},
  year={2017},
  organization={IEEE}
}

@inproceedings{balle2020hypothesis,
  title={{Hypothesis Testing Interpretations and Renyi Differential Privacy}},
  author={Balle, Borja and Barthe, Gilles and Gaboardi, Marco and Hsu, Justin and Sato, Tetsuya},
  booktitle={International Conference on Artificial Intelligence and Statistics},
  pages={2496--2506},
  year={2020},
  organization={PMLR}
}

@article{geyer2017differentially,
  title={{Differentially Private Federated Learning: A Client Level Perspective}},
  author={Geyer, Robin C and Klein, Tassilo and Nabi, Moin},
  journal={arXiv preprint arXiv:1712.07557},
  year={2017}
}

@article{mcmahan2017learning,
  title={{Learning Differentially Private Recurrent Language Models}},
  author={McMahan, H Brendan and Ramage, Daniel and Talwar, Kunal and Zhang, Li},
  journal={arXiv preprint arXiv:1710.06963},
  year={2017}
}

@article{papernot2021hyperparameter,
  title={{Hyperparameter Tuning with Renyi Differential Privacy}},
  author={Papernot, Nicolas and Steinke, Thomas},
  journal={arXiv preprint arXiv:2110.03620},
  year={2021}
}

@inproceedings{liu2019private,
  title={{Private Selection from Private Candidates}},
  author={Liu, Jingcheng and Talwar, Kunal},
  booktitle={Proceedings of the 51st Annual ACM SIGACT Symposium on Theory of Computing},
  pages={298--309},
  year={2019}
}

@article{koskela2023practical,
  title={{Practical Differentially Private Hyperparameter Tuning with Subsampling}},
  author={Koskela, Antti and Kulkarni, Tejas D},
  journal={Advances in Neural Information Processing Systems},
  volume={36},
  pages={28201--28225},
  year={2023}
}

@article{wirth2022easysmpc,
  title={{EasySMPC: A Simple but Powerful No-Code Tool for Practical Secure Multiparty Computation}},
  author={Wirth, Felix Nikolaus and Kussel, Tobias and M{\"u}ller, Armin and Hamacher, Kay and Prasser, Fabian},
  journal={BMC bioinformatics},
  volume={23},
  number={1},
  pages={531},
  year={2022},
  publisher={Springer}
}

@article{gamiz2025challenges,
  title={{Challenges and Future Research Directions in Secure Multi-Party Computation for Resource-Constrained Devices and Large-Scale Computations}},
  author={Gamiz, Idoia and Regueiro, Cristina and Lage, Oscar and Jacob, Eduardo and Astorga, Jasone},
  journal={International Journal of Information Security},
  volume={24},
  number={1},
  pages={1--29},
  year={2025},
  publisher={Springer}
}

@article{zhou2024secure,
  title={{Secure Multi-Party Computation for Machine Learning: A Survey}},
  author={Zhou, Ian and Tofigh, Farzad and Piccardi, Massimo and Abolhasan, Mehran and Franklin, Daniel and Lipman, Justin},
  journal={IEEE Access},
  year={2024},
  publisher={IEEE}
}

@inproceedings{hu2021make,
  title={{How to Make Private Distributed Cardinality Estimation Practical, and Get Differential Privacy for Free}},
  author={Hu, Changhui and Li, Jin and Liu, Zheli and Guo, Xiaojie and Wei, Yu and Guang, Xuan and Loukides, Grigorios and Dong, Changyu},
  booktitle={30th USENIX security symposium (USENIX Security 21)},
  pages={965--982},
  year={2021}
}

@article{mitic2025privacy,
  title={{Privacy-Preserving Hyperparameter Tuning for Federated Learning}},
  author={Mitic, Natalija and Pyrgelis, Apostolos and Sav, Sinem},
  journal={IEEE Transactions on Privacy},
  year={2025},
  publisher={IEEE}
}

@article{seng2022feathers,
  title={{Feathers: Federated Architecture and Hyperparameter Search}},
  author={Seng, Jonas and Prasad, Pooja and Mundt, Martin and Dhami, Devendra Singh and Kersting, Kristian},
  journal={arXiv preprint arXiv:2206.12342},
  year={2022}
}

@article{dai2021differentially,
  title={{Differentially Private Federated Bayesian Optimization with Distributed Exploration}},
  author={Dai, Zhongxiang and Low, Bryan Kian Hsiang and Jaillet, Patrick},
  journal={Advances in Neural Information Processing Systems},
  volume={34},
  pages={9125--9139},
  year={2021}
}

@inproceedings{chen2024locally,
  title={{Locally Differentially Private Decentralized Stochastic Bilevel Optimization with Guaranteed Convergence Accuracy}},
  author={Chen, Ziqin and Wang, Yongqiang},
  booktitle={Forty-first International Conference on Machine Learning},
  year={2024}
}

@article{beutel2020flower,
  title={{Flower: A Friendly Federated Learning Research Framework}},
  author={Beutel, Daniel J and Topal, Taner and Mathur, Akhil and Qiu, Xinchi and Fernandez-Marques, Javier and Gao, Yan and Sani, Lorenzo and Kwing, Hei Li and Parcollet, Titouan and Gusmão, Pedro PB de and Lane, Nicholas D},
  journal={arXiv preprint arXiv:2007.14390},
  year={2020}
}

@article{lecun2010mnist,
  title={{MNIST Handwritten Digit Database}},
  author={LeCun, Yann and Cortes, Corinna and Burges, CJ},
  journal={ATT Labs [Online]. Available: http://yann.lecun.com/exdb/mnist},
  volume={2},
  year={2010}
}

@article{krizhevsky2009learning,
  title={{Learning Multiple Layers of Features from Tiny Images}},
  author={Krizhevsky, Alex and Hinton, Geoffrey and others},
  year={2009},
  publisher={Toronto, ON, Canada}
}

@misc{data1996adult,
author       = {Becker, Barry and Kohavi, Ronny},
title        = {{Adult}},
year         = {1996},
howpublished = {UCI Machine Learning Repository},
note         = {{DOI}: \url{https://doi.org/10.24432/C5XW20}}
}

@InProceedings{fedavg,
  title = 	 {{Communication-Efficient Learning of Deep Networks from Decentralized Data}},
  author = 	 {McMahan, Brendan and Moore, Eider and Ramage, Daniel and Hampson, Seth and Arcas, Blaise Aguera y},
  booktitle = 	 {Proceedings of the 20th International Conference on Artificial Intelligence and Statistics},
  pages = 	 {1273--1282},
  year = 	 {2017},
  editor = 	 {Singh, Aarti and Zhu, Jerry},
  volume = 	 {54},
  series = 	 {Proceedings of Machine Learning Research},
  month = 	 {20--22 Apr},
  publisher =    {PMLR},
  pdf = 	 {http://proceedings.mlr.press/v54/mcmahan17a/mcmahan17a.pdf},
  url = 	 {https://proceedings.mlr.press/v54/mcmahan17a.html},
}

@article{feddpvoting,
  title={{Voting-Based Approaches for Differentially Private Federated Learning}},
  author={Zhu, Yuqing and Yu, Xiang and Tsai, Yi-Hsuan and Pittaluga, Francesco and Faraki, Masoud and Wang, Yu-Xiang and others},
  journal={arXiv preprint arXiv:2010.04851},
  year={2020}
}

@inproceedings{david2023local,
  title={{Local Differential Privacy in Voting.}},
  author={David, Bernardo and Giustolisi, Rosario and Mortensen, Victor and Pedersen, Morten},
  booktitle={ITASEC},
  year={2023}
}

@article{yan2019dpwevote,
  title={{DPWeVote: Differentially Private Weighted Voting Protocol for Cloud-Based Decision-Making}},
  author={Yan, Ziqi and Liu, Jiqiang and Liu, Shaowu},
  journal={Enterprise Information Systems},
  volume={13},
  number={2},
  pages={236--256},
  year={2019},
  publisher={Taylor \& Francis}
}

@inproceedings{zhu2020federated,
  title={{Federated Heavy Hitters Discovery with Differential Privacy}},
  author={Zhu, Wennan and Kairouz, Peter and McMahan, Brendan and Sun, Haicheng and Li, Wei},
  booktitle={International Conference on Artificial Intelligence and Statistics},
  pages={3837--3847},
  year={2020},
  organization={PMLR}
}

@article{bagdasaryan2021towards,
  title={{Towards Sparse Federated Analytics: Location Heatmaps under Distributed Differential Privacy with Secure Aggregation}},
  author={Bagdasaryan, Eugene and Kairouz, Peter and Mellem, Stefan and Gasc{\'o}n, Adri{\`a} and Bonawitz, Kallista and Estrin, Deborah and Gruteser, Marco},
  journal={arXiv preprint arXiv:2111.02356},
  year={2021}
}

@inproceedings{xu2024privacy,
  title={{Privacy-Preserving Heterogeneous Federated Learning for Sensitive Healthcare Data}},
  author={Xu, Yukai and Zhang, Jingfeng and Gu, Yujie},
  booktitle={2024 IEEE Conference on Artificial Intelligence (CAI)},
  pages={1142--1147},
  year={2024},
  organization={IEEE}
}

@inproceedings{kairouz2021distributed,
  title={{The Distributed Discrete Gaussian Mechanism for Federated Learning with Secure Aggregation}},
  author={Kairouz, Peter and Liu, Ziyu and Steinke, Thomas},
  booktitle={International Conference on Machine Learning},
  pages={5201--5212},
  year={2021},
  organization={PMLR}
}

@inproceedings{abadi2016deep,
  title={Deep learning with differential privacy},
  author={Abadi, Martin and Chu, Andy and Goodfellow, Ian and McMahan, H Brendan and Mironov, Ilya and Talwar, Kunal and Zhang, Li},
  booktitle={Proceedings of the 2016 ACM SIGSAC conference on computer and communications security},
  pages={308--318},
  year={2016}
}

@article{reddi2020adaptive,
  title={Adaptive federated optimization},
  author={Reddi, Sashank and Charles, Zachary and Zaheer, Manzil and Garrett, Zachary and Rush, Keith and Kone{\v{c}}n{\`y}, Jakub and Kumar, Sanjiv and McMahan, H Brendan},
  journal={arXiv preprint arXiv:2003.00295},
  year={2020}
}

@inproceedings{damgard2024privateselection,
author = {Damg\r{a}rd, Ivan and Keller, Hannah and Nelson, Boel and Orlandi, Claudio and Pagh, Rasmus},
title = {Differentially Private Selection from Secure Distributed Computing},
year = {2024},
isbn = {9798400701719},
publisher = {Association for Computing Machinery},
address = {New York, NY, USA},
url = {https://doi.org/10.1145/3589334.3645435},
doi = {10.1145/3589334.3645435},
booktitle = {Proceedings of the ACM Web Conference 2024},
pages = {1103–1114},
numpages = {12},
keywords = {differential privacy, multi-party computation, selection, cryptography},
location = {Singapore, Singapore},
series = {WWW '24}
}

@INPROCEEDINGS {balle2025heavyhitter,
author = { Balle, Borja and Bell-Clark, James and Cheu, Albert and Gascon, Adria and Katz, Jonathan and Raykova, Mariana and Schoppmann, Phillipp and Steinke, Thomas },
booktitle = { 2025 IEEE Symposium on Security and Privacy (SP) },
title = {{ Hash-Prune-Invert: Improved Differentially Private Heavy-Hitter Detection in the Two-Server Model }},
year = {2025},
volume = {},
ISSN = {},
pages = {2903-2918},
doi = {10.1109/SP61157.2025.00186},
url = {https://doi.ieeecomputersociety.org/10.1109/SP61157.2025.00186},
publisher = {IEEE Computer Society},
}

@INPROCEEDINGS{boneh2021heavyhitters,
  author={Boneh, Dan and Boyle, Elette and Corrigan-Gibbs, Henry and Gilboa, Niv and Ishai, Yuval},
  booktitle={2021 IEEE Symposium on Security and Privacy (SP)}, 
  title={Lightweight Techniques for Private Heavy Hitters}, 
  year={2021},
  volume={},
  number={},
  pages={762-776},
  doi={10.1109/SP40001.2021.00048}
}

@inproceedings{boehler2021heavyhitters,
author = {B\"{o}hler, Jonas and Kerschbaum, Florian},
title = {Secure Multi-party Computation of Differentially Private Heavy Hitters},
year = {2021},
isbn = {9781450384544},
publisher = {Association for Computing Machinery},
address = {New York, NY, USA},
url = {https://doi.org/10.1145/3460120.3484557},
doi = {10.1145/3460120.3484557},
booktitle = {Proceedings of the 2021 ACM SIGSAC Conference on Computer and Communications Security},
pages = {2361–2377},
numpages = {17},
keywords = {data anonymization and sanitization, privacy protections, privacy-preserving protocols},
}

@article{zhang2025heavyhitterldp,
author = {Zhang, Yuemin and Ye, Qingqing and Hu, Haibo},
title = {Federated Heavy Hitter Analytics with Local Differential Privacy},
year = {2025},
issue_date = {February 2025},
publisher = {Association for Computing Machinery},
address = {New York, NY, USA},
volume = {3},
number = {1},
url = {https://doi.org/10.1145/3709739},
doi = {10.1145/3709739},
journal = {Proc. ACM Manag. Data},
month = feb,
articleno = {42},
numpages = {27},
}

@INPROCEEDINGS{chadha2024heavyhitterldp,
  author={Chadha, Karan and Chen, Junye and Duchi, John and Feldman, Vitaly and Hashemi, Hanieh and Javidbakht, Omid and McMillan, Audra and Talwar, Kunal},
  booktitle={2024 IEEE Conference on Secure and Trustworthy Machine Learning (SaTML)}, 
  title={Differentially Private Heavy Hitter Detection using Federated Analytics}, 
  year={2024},
  volume={},
  number={},
  pages={512-533},
  doi={10.1109/SaTML59370.2024.00032}
}

@article{li2024heavyhitterldp,
author = {Li, Xiaochen and Liu, Weiran and Lou, Jian and Hong, Yuan and Zhang, Lei and Qin, Zhan and Ren, Kui},
title = {Local Differentially Private Heavy Hitter Detection in Data Streams with Bounded Memory},
year = {2024},
issue_date = {February 2024},
publisher = {Association for Computing Machinery},
address = {New York, NY, USA},
volume = {2},
number = {1},
url = {https://doi.org/10.1145/3639285},
doi = {10.1145/3639285},
journal = {Proc. ACM Manag. Data},
month = mar,
articleno = {30},
numpages = {27},
}

@article{bun2019heavyhittersldp,
author = {Bun, Mark and Nelson, Jelani and Stemmer, Uri},
title = {Heavy Hitters and the Structure of Local Privacy},
year = {2019},
issue_date = {October 2019},
publisher = {Association for Computing Machinery},
address = {New York, NY, USA},
volume = {15},
number = {4},
issn = {1549-6325},
url = {https://doi.org/10.1145/3344722},
doi = {10.1145/3344722},
journal = {ACM Trans. Algorithms},
month = oct,
articleno = {51},
numpages = {40},
}

@ARTICLE{wang2021heavyhitterldp,
author={Wang, Tianhao and Li, Ninghui and Jha, Somesh},
journal={ IEEE Transactions on Dependable and Secure Computing },
title={{ Locally Differentially Private Heavy Hitter Identification }},
year={2021},
volume={18},
number={02},
ISSN={1941-0018},
pages={982-993},
doi={10.1109/TDSC.2019.2927695},
url = {https://doi.ieeecomputersociety.org/10.1109/TDSC.2019.2927695},
publisher={IEEE Computer Society},
address={Los Alamitos, CA, USA},
}

@inproceedings{bassily2017heavyhittersldp,
author = {Bassily, Raef and Nissim, Kobbi and Stemmer, Uri and Thakurta, Abhradeep},
title = {Practical locally private heavy hitters},
year = {2017},
isbn = {9781510860964},
publisher = {Curran Associates Inc.},
booktitle = {Proceedings of the 31st International Conference on Neural Information Processing Systems},
pages = {2285–2293},
numpages = {9},
series = {NIPS'17}
}

@InProceedings{balle2019blanket,
author="Balle, Borja
and Bell, James
and Gasc{\'o}n, Adri{\`a}
and Nissim, Kobbi",
editor="Boldyreva, Alexandra
and Micciancio, Daniele",
title="The Privacy Blanket of the Shuffle Model",
booktitle="Advances in Cryptology -- CRYPTO 2019",
year="2019",
publisher="Springer International Publishing",
pages="638--667",
}

@article{sabater2022fedavg,
author = {Sabater, C\'{e}sar and Bellet, Aur\'{e}lien and Ramon, Jan},
title = {An accurate, scalable and verifiable protocol for federated differentially private averaging},
year = {2022},
issue_date = {Nov 2022},
publisher = {Kluwer Academic Publishers},
address = {USA},
volume = {111},
number = {11},
issn = {0885-6125},
url = {https://doi.org/10.1007/s10994-022-06267-9},
doi = {10.1007/s10994-022-06267-9},
journal = {Mach. Learn.},
month = nov,
pages = {4249–4293},
numpages = {45},
}

@article{wang2024advances,
  title={Advances in neural architecture search},
  author={Wang, Xin and Zhu, Wenwu},
  journal={National Science Review},
  volume={11},
  number={8},
  pages={nwae282},
  year={2024},
  publisher={Oxford University Press}
}

@inproceedings{benmeziane2024medical,
  title={Medical neural architecture search: Survey and taxonomy},
  author={Benmeziane, Hadjer and Hamzaoui, Imane and Cherif, Zayneb and El Maghraoui, Kaoutar},
  booktitle={International joint conference on artificial intelligence},
  year={2024}
}

\begin{appendices}

\crefalias{section}{appendix}
\crefalias{subsection}{appendix}
\crefalias{chapter}{appendix} 
\Crefname{appendix}{Appendix}{Appendices}
\crefname{appendix}{appendix}{appendices}

\section{Practical Considerations}
\label{sec::practicalcons}
In this section, we state the advantages of a majority vote when dealing with skewed data or dropouts. Afterward, we discuss how the utility and privacy of \tool{} could be affected if the server or some clients are malicious and may deviate from the protocol. 

\subsection{Robustness}
For practical deployment, it is important to consider how \tool{} behaves under skewed data and client dropouts during a protocol run. Notably, \tool{} can also operate in a fully federated setting without a central server, depending on the underlying secure summation protocol. Since the algorithm itself does not require central coordination, each client can independently process the aggregated result once the final sum is available, and any client could take over the aggregation task without additional computational cost. With respect to data distribution, the majority-based design of \tool{} is both a strength and a limitation. On the one hand, the algorithm assumes that data is sufficiently similar among clients, ensuring that most clients reach comparable conclusions when evaluating hyperparameters. On the other hand, this same reliance on majority decisions makes \tool{} robust to a small fraction of clients with highly skewed data, as their influence on the global outcome is limited.

A second challenge arises from client dropouts, a common issue in federated learning where clients may disconnect during the protocol due to failures or unstable connections. Whether dropouts can be tolerated without disrupting the aggregation depends on the secure summation protocol, and dropout resistance is an important feature of many of such protocols. In the case of \tool{}, dropouts would normally lead to insufficiently noised voting vectors $\votingvec$, undermining differential privacy guarantees. A key advantage of \tool{} is that it can tolerate dropouts by design with only minimal adjustments. Specifically, if the algorithm is configured to tolerate up to a fraction $\xi$ of dropouts, each client replaces the denominator $\numclients$ in the variance scaling of the Gaussian noise with $(1-\xi)\cdot \numclients$. This ensures that even if up to $\xi \cdot \numclients$ clients drop out, the aggregated vector still receives sufficient noise. While this approach slightly worsens the privacy–utility trade-off, it provides a practical safeguard against unpredictable client failures and has also been applied in related work.

\subsection{Malicious Clients and Server}
So far, we have assumed that all clients and the server are honest-but-curious, meaning they follow the protocol faithfully, but may still try to infer sensitive information from observed messages. A natural extension is to ask how \tool{} behaves if some participants are malicious and actively deviate from the protocol. One straightforward attack against privacy would be for clients to avoid the addition of noise to their local voting vectors. This issue can be mitigated similarly to the dropout scenario by introducing additional noise to compensate for potentially malicious clients. In that case, even if a certain fraction of clients refuses to add noise, the remaining honest participants are sufficient to ensure differential privacy. Since \tool{} can operate successfully under small privacy budgets, this approach remains feasible and the effect of individual malicious clients becomes negligible as the total number of participants grows. Moreover, clients cannot observe the votes of others by design, which prevents them from coordinating such an attack.

Beyond privacy, malicious clients may also attempt to sabotage utility by voting randomly or refusing to support good hyperparameters. Fortunately, voting mechanisms naturally provide a degree of Byzantine robustness: The impact of any single malicious client is bounded by the sensitivity of their local voting vector $\votingvec_i$, preventing them from arbitrarily manipulating the result. Still, the use of secure summation means that individual votes are hidden, and thus protocol compliance cannot be verified directly. Consequently, the impact of a malicious client's local voting vector is potentially unbounded. Fortunately, we can make use of the modularity of \tool{} and its independence of specific secure aggregation protocols. By using GOPA~\cite{sabater2022fedavg}, a fully federated secure aggregation protocol, input poisoning can be mitigated as it supports input validation mechanisms in a scalable way.  

Finally, if the server itself is malicious, it could attempt to influence the protocol outcome by misreporting the result of the secure summation. This type of manipulation affects only utility, not privacy, as noise is applied locally on the client side. The only way for a malicious server to compromise privacy would be by tampering with the secure summation protocol to reveal individual votes. However, many secure aggregation schemes are designed to tolerate a fraction of malicious clients and even adversarial servers, ensuring that privacy guarantees remain intact under realistic threat models.

\section{Model Architectures}
\label{app:architectures}
For our evaluation, we use the following NN architectures:
\paragraph{MNIST} It begins with two convolutional layers featuring $32$ and $64$ filters, respectively, each using a $3 \times 3$ kernel. This is followed by a $2$d max-pooling layer of size $2$. The classifier head consists of two dense layers with a hidden size of $128$. The first dense layer utilizes ReLU activation and is preceded by a dropout layer with a rate of $0.25$. An additional dropout layer with a rate of $0.5$ is applied before the final output, which serves as input for the cross entropy loss.
\paragraph{Cifar-10} The architecture consists of two sequential convolutional blocks followed by a global average pooling layer and a final dense layer. The first block contains two convolutional layers with $32$ filters, each followed by batch normalization and ReLU activation, ending with a max-pooling layer and a dropout rate of $0.25$. The second block follows a similar structure but increases the filter count to $64$ and $128$, respectively, also concluding with max-pooling and $0.25$ dropout. A fully connected layer produces the output, which serves as input for the cross entropy loss.
\paragraph{Adult} For this data set, an MLP is employed, consisting of two hidden layers with $64$ units each. Both layers utilize ReLU activation functions. To mitigate overfitting, a dropout layer with a rate of $0.2$ is applied after the second hidden layer. The resulting raw logits are passed directly to a binary cross-entropy loss function.

\section{Visualization of Non-IID Clients}
\label{app:visualnoniid}

To better understand how tough the non-iid learning scenarios are and how much the scenarios we evaluate differ from one another, we visualize the label distribution and size of clients' local data sets. In the non-iid scenarios, the MNIST, Cifar-10 and Adult data sets are divided among clients using a Dirichlet partitioning approach. More specifically, for each label $l$ in the data set, we draw proportions from an $\numclients$-dimensional Dirichlet distribution with concentration parameter $\dirichletparam$, which determines how samples with label $l$ are split across clients. Once these proportions are fixed, the data set is shuffled and federated accordingly. When setting the parameter $\dirichletparam$ of the Dirichlet distribution to $30.0$, the data set is split more iid than non-iid. For smaller values of $\dirichletparam$ however, the label distribution is quite skewed over the clients and the size of their local data sets can vary a lot, see \Cref{fig:visualnoniidmnistcifar} and \Cref{fig:visualnoniidadult}.

\begin{figure*}[t]
\centering

\subfloat[{\textbf{MNIST data set}}]{
  \includegraphics[width=0.25\textwidth]{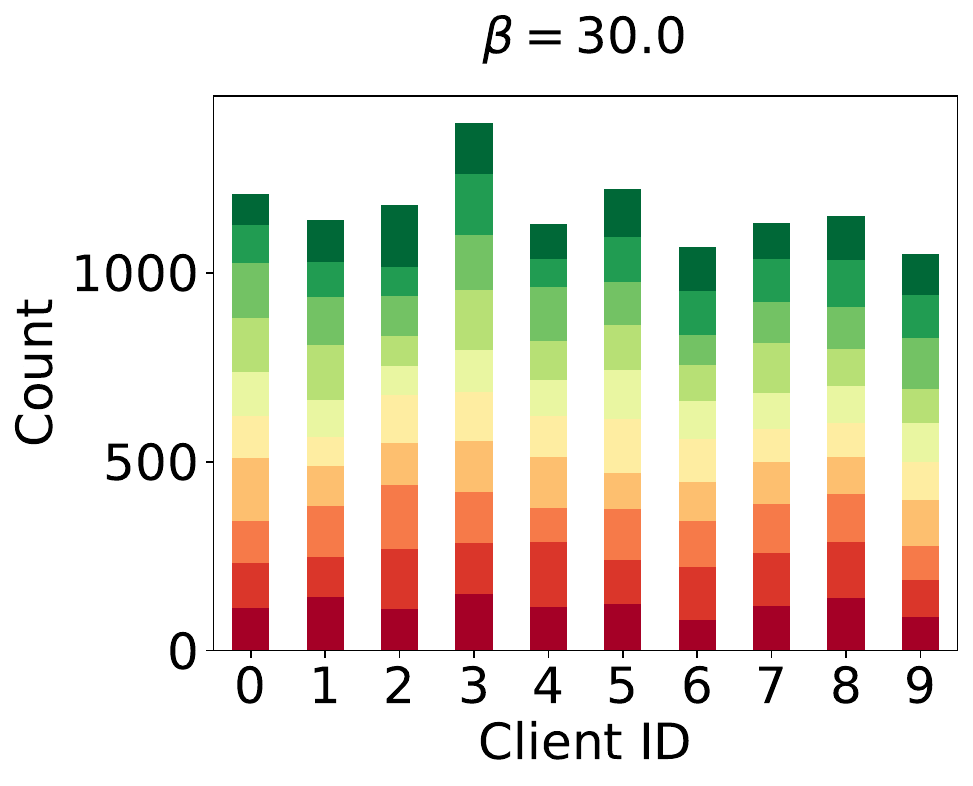}
  \includegraphics[width=0.25\textwidth]{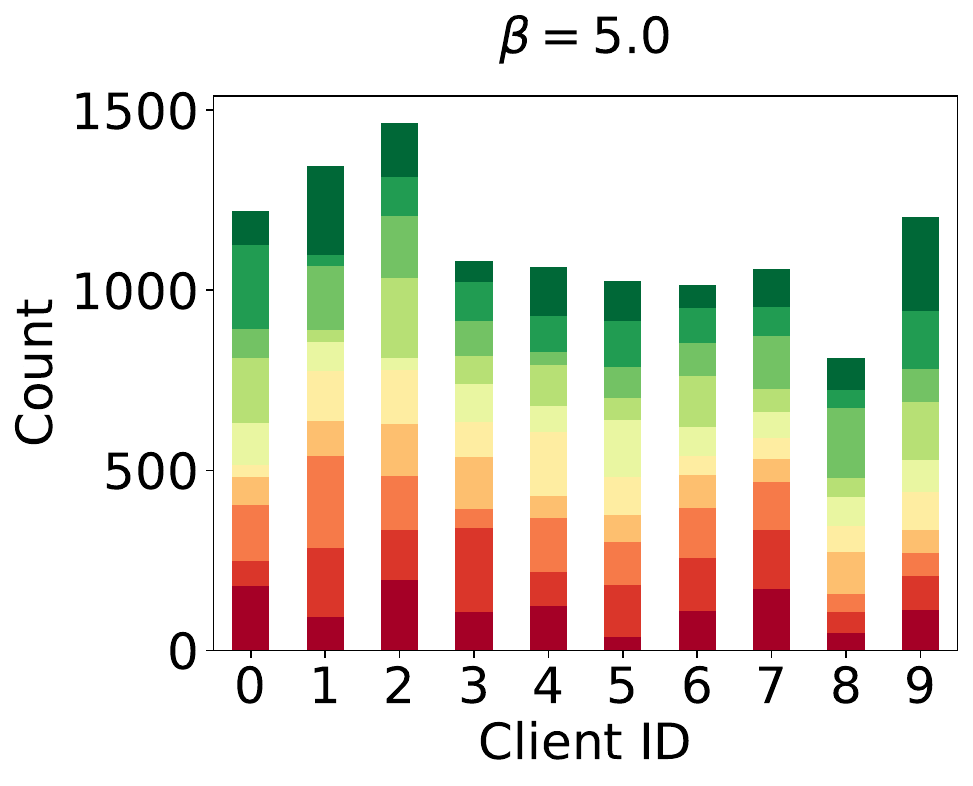}
  \includegraphics[width=0.25\textwidth]{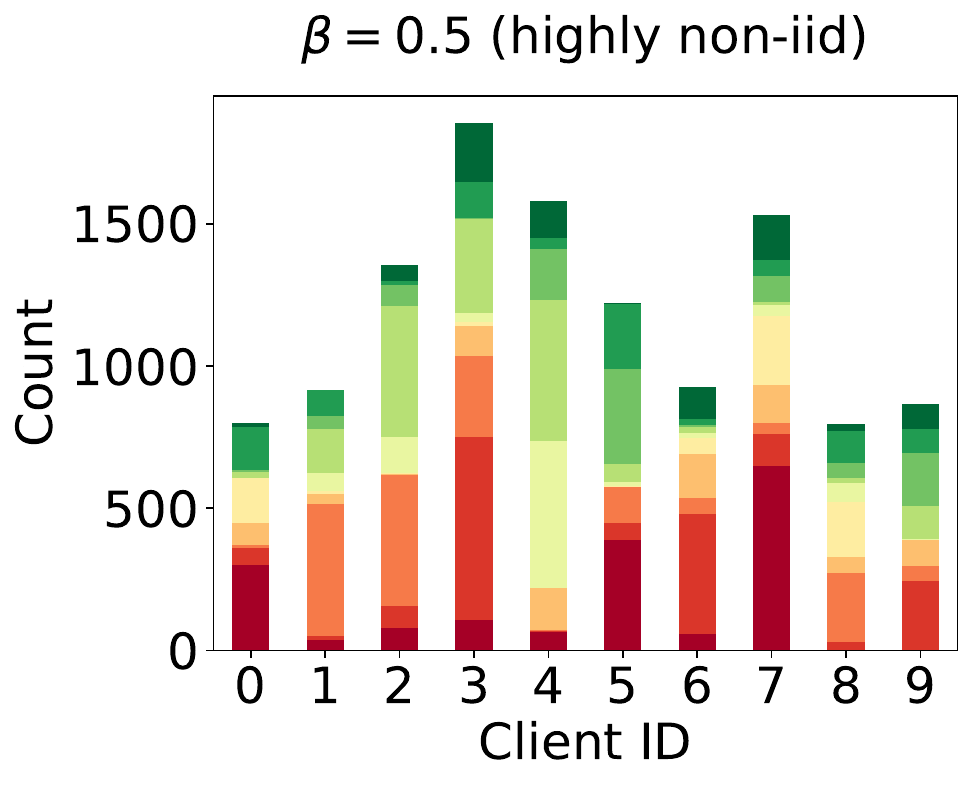}
  \label{fig:visual_mnist}
}

\par

\subfloat[{\textbf{Cifar-10 data set}}]{
  \includegraphics[width=0.25\textwidth]{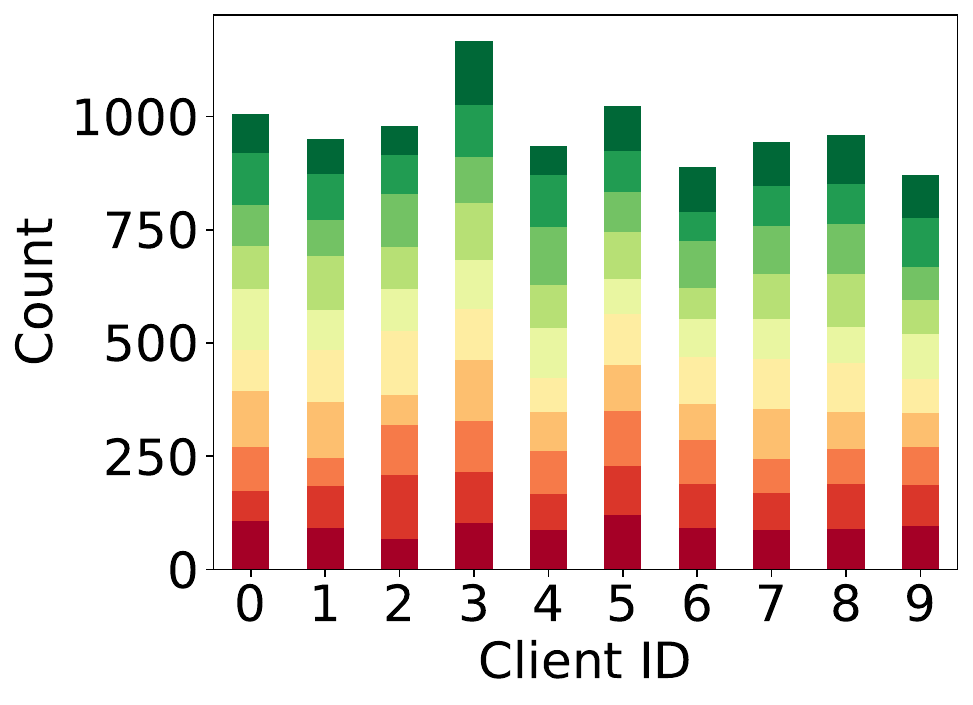}
  \includegraphics[width=0.25\textwidth]{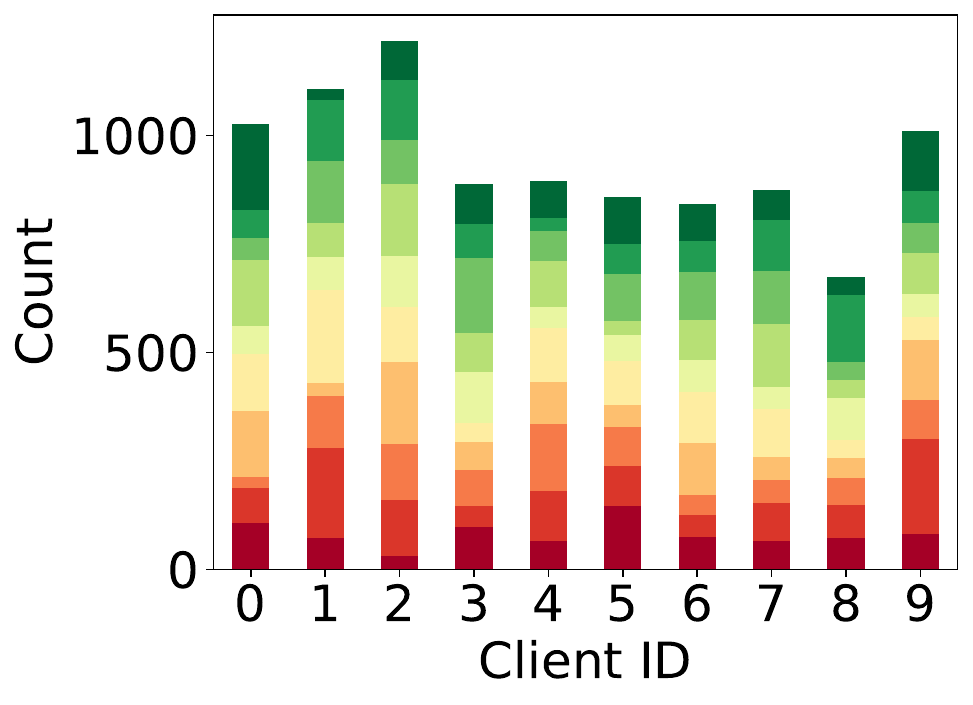}
  \includegraphics[width=0.25\textwidth]{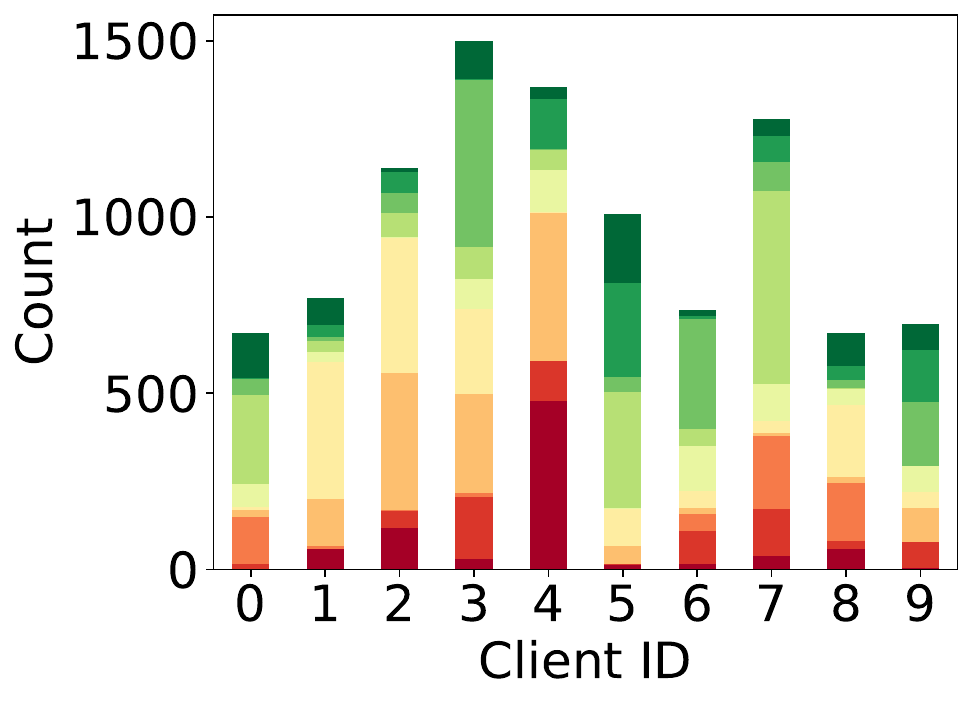}
  \label{fig:visual_cifar10}
}

\par 

\subfloat{
  \includegraphics[width=0.8\textwidth]{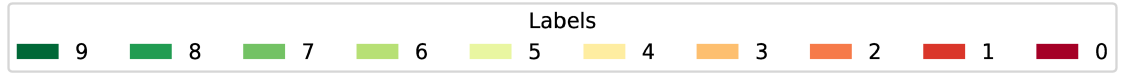}
}

\caption{Visualization of label distribution and size of local data sets of 10 clients on the data sets MNIST and Cifar-10 for varying non-iid scenarios. The data sets are split among 50 clients using a Dirichlet partitioning approach with Dirichlet concentration parameter $\dirichletparam \in \{30.0, 5.0, 0.5\}$, only 10 clients are shown. The degree of non-iid $\dirichletparam$ increases from left to right, which means that the data heterogeneity increases as well.}
\label{fig:visualnoniidmnistcifar}
\end{figure*}

\begin{figure*}[t]
\centering

\subfloat[{\textbf{Adult data set}}]{
  \includegraphics[width=0.25\textwidth]{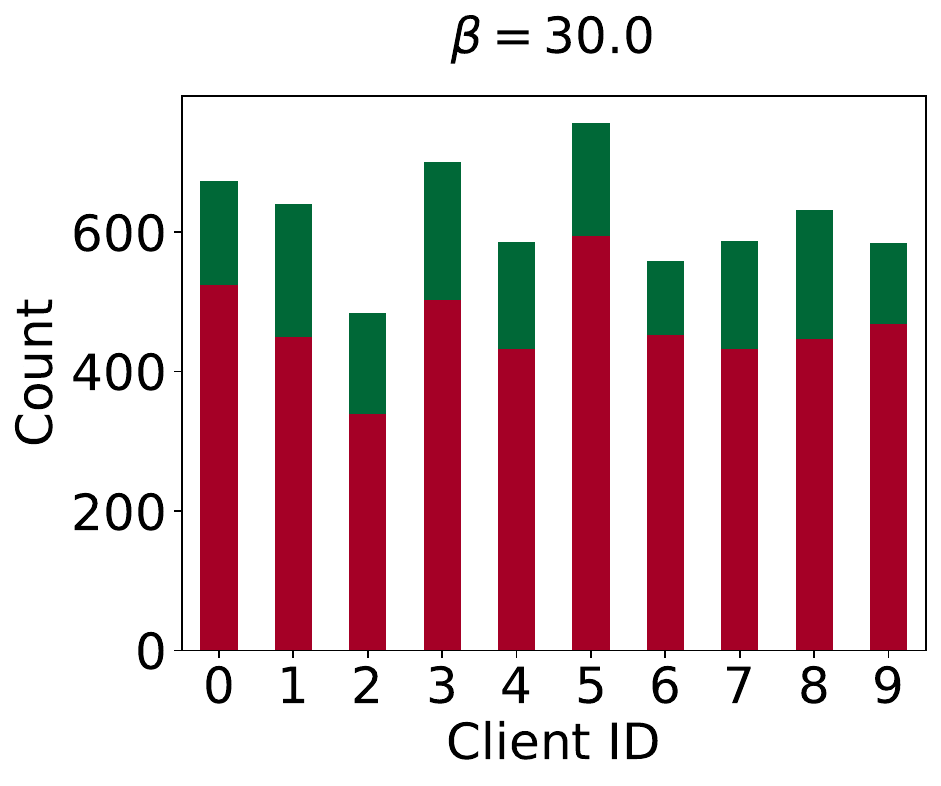}
  \includegraphics[width=0.25\textwidth]{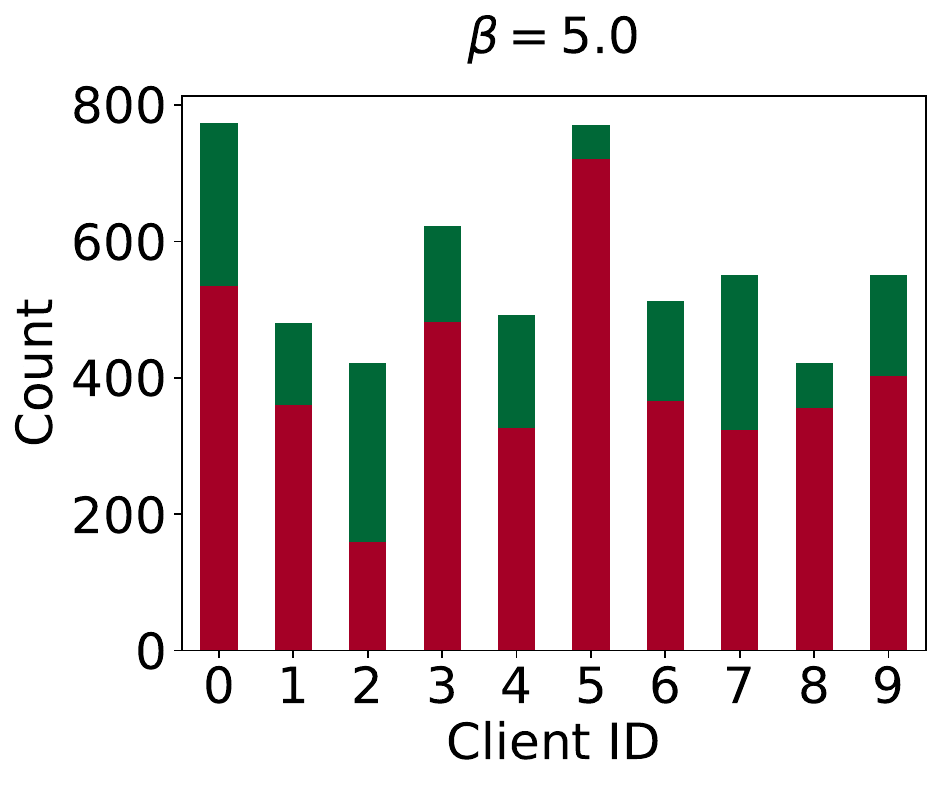}
  \includegraphics[width=0.25\textwidth]{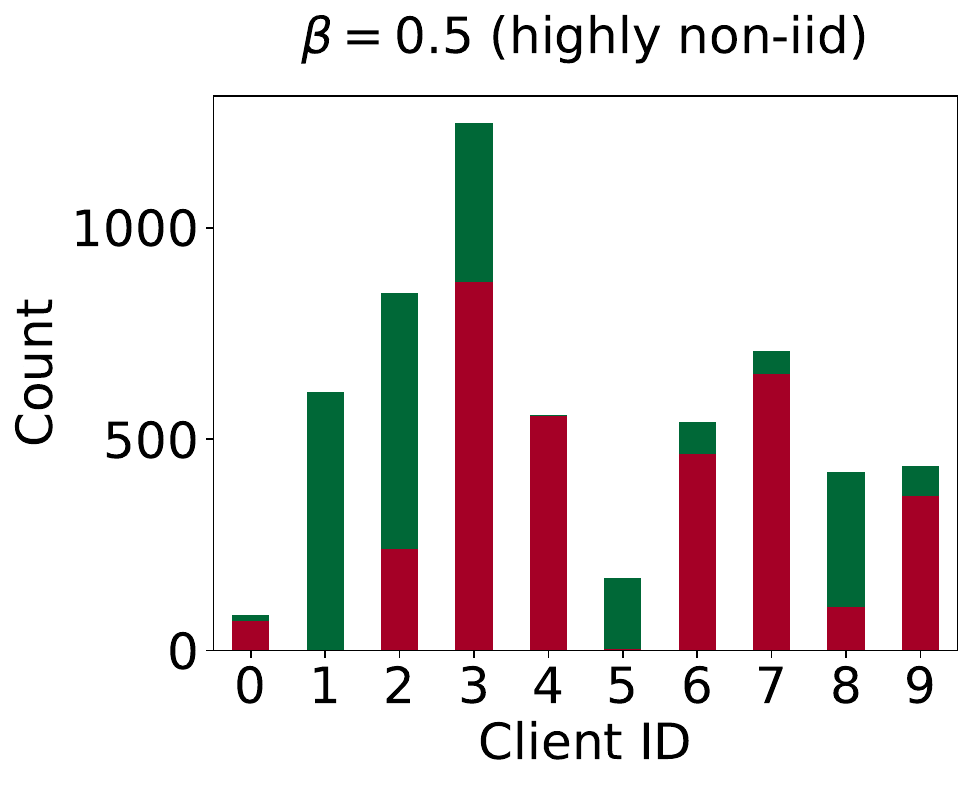}
  \label{fig:visual_adult}
}

\par

\subfloat{
  \includegraphics[width=0.2\textwidth]{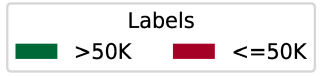}
}

\caption{Visualization of label distribution and size of local data sets of 10 clients on the Adult data set for varying non-iid scenarios. The data set is split among 50 clients using a Dirichlet partitioning approach with Dirichlet concentration parameter $\dirichletparam \in \{30.0, 5.0, 0.5\}$, only 10 clients are shown. The degree of non-iid $\dirichletparam$ increases from left to right, which means that the data heterogeneity increases as well.}
\label{fig:visualnoniidadult}
\end{figure*}

\section{Non-IID Accuracy Distributions}
\label{app:noniid_lossdists}
To provide further insights into the underlying structure of the hyperparameter search, we present the accuracy distributions for $\dirichletparam \in \{30.0, 5.0, 0.5\}$ and $\numclients \in \{50, 100, 250\}$ in \Cref{fig:lossdist_noniid}. For each hyperparameter, a federated model was trained to obtain $100$ accuracy values. 
Interestingly, the accuracy distributions do not differ much from the global evaluation reported in \Cref{fig:lossdist}. There are mainly two side effects that can be observed. First, for an increasing $\dirichletparam$ (top to bottom) the distributions shift slightly to the left. In other words, the maximum possible accuracy decreases for harder non-iid scenarios, which can also be observed in the reported results. Second, for a growing number of clients (right to left) the accuracy again decreases but this time only a bit, which is also visible in the results. Both side effects intuitively make sense, because both make the learning task harder, either by amplifying the non-iid degree or by reducing the number of data points per client.

\begin{figure*}[t]
\centering

\subfloat[{$\mathbf{\dirichletparam = 30.0}$}]{
  \includegraphics[width=\textwidth]{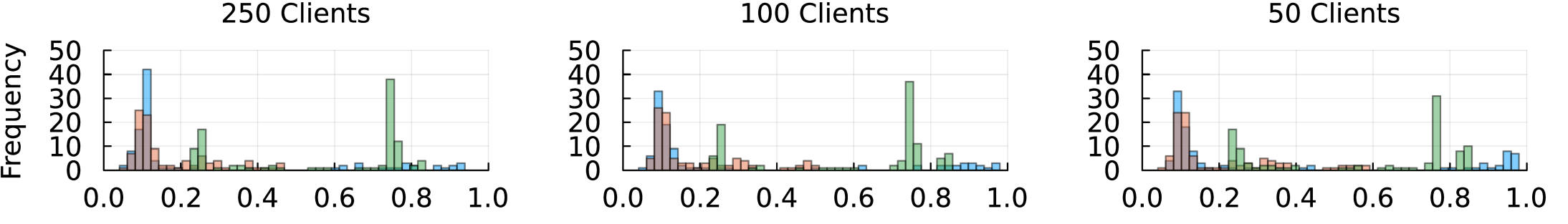}
  \label{fig:lossdist_a30.0}
}

\par

\subfloat[{$\mathbf{\dirichletparam = 5.0}$}]{
  \includegraphics[width=\textwidth]{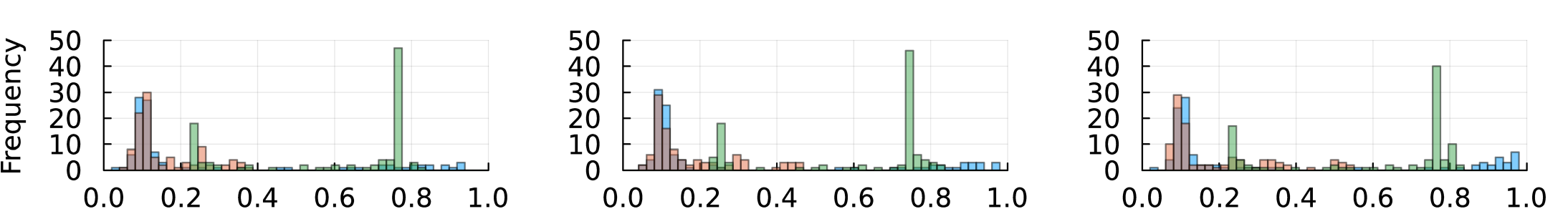}
  \label{fig:lossdist_a5.0}
}

\par

\subfloat[{$\mathbf{\dirichletparam = 0.5}$ \textbf{(Highly non-iid)}}]{
  \includegraphics[width=\textwidth]{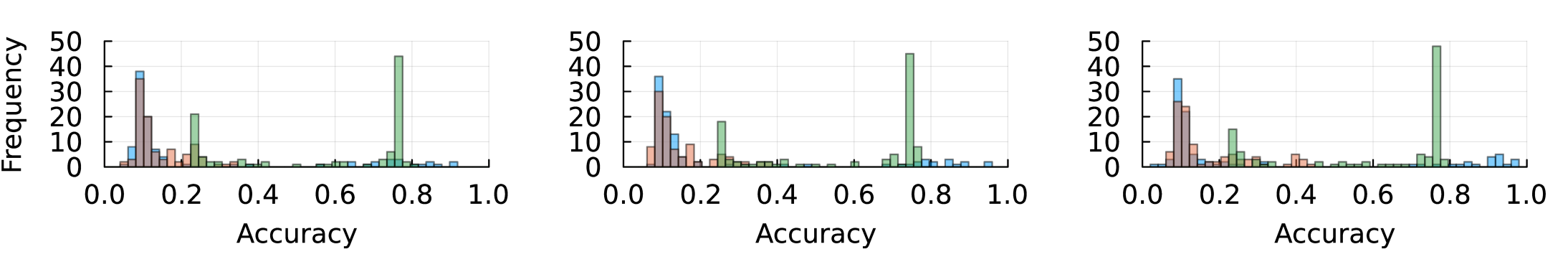}
  \label{fig:lossdist_a0.5}
}

\par

\subfloat{
  \includegraphics[width=\textwidth]{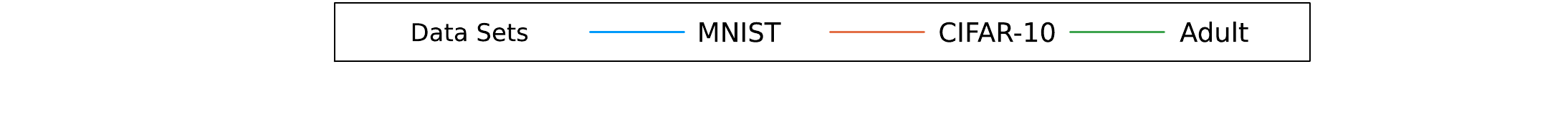}
}

\caption{A histogram over individual accuracy values when using federated learning with $\numclients \in \{50, 100, 250\}$. We train a model on each data set using SGD in different non-iid settings with Dirichlet concentration parameter $\dirichletparam \in \{30.0, 5.0, 0.5\}$ and $\dirichletparam=2.0$ for $\numclients=250$ on Adult
on the set of hyperparameters shown in \Cref{tbl::hps}.}
\label{fig:lossdist_noniid}
\end{figure*}

\section{Additional Non-IID Results}
\label{app:evalnoniid}
In this section, we present the results for the non-iid scenario that were moved here due to space limitations. The experimental setup is the same as in \Cref{sec:eval_privutiltradeoff} with $\numclients=250$ in \Cref{fig:evalnoniidN250} and $\numclients = 50$ in \Cref{fig:evalnoniidN50}. As a small side note, $\dirichletparam=2$ on Adult for $\numclients=250$ in the scenario of $\dirichletparam=0.5$, because this hard non-iid setting in combination with binary classification messes up the label and data distribution so much that no reasonable results could be obtained for Adult. 

The additional results uncover some interesting effects in the non-iid scenario. The performance of \opt{} decreases further for $\numclients=250$, due to smaller individual data sets of clients, and the opposite for $\numclients=50$. As expected, \tool{} benefits from a larger number of clients as this naturally increases the number of votes per hyperparameter, which makes it harder for the introduced noise to let a bad one win the voting.

\section{Raw Losses Instead of Binary Votes}
\label{app:rawlosses}
In our current design, each client participates in the binary vote by assigning a value of one to the $\numlocaltops$ best-performing hyperparameter candidates and zero to all remaining candidates. A natural alternative is to replace these binary indicators with real-valued scores derived from local performance, enabling a more fine-grained vote in which the globally best hyperparameter can, in principle, attain the true maximum. Using raw losses directly is inconvenient because lower loss corresponds to higher utility. Moreover, loss functions used in \tool{} must be bounded to avoid unbounded sensitivity. One simple normalization is to subtract each local loss from the maximum possible loss so that higher values correspond to better utility. In our setting, we can instead use accuracy as the score, since it is naturally bounded in the interval $[0,1]$ and therefore does not alter the sensitivity bound of the voting vector. We evaluate this strategy for $\numclients = 100$ and $\dirichletparam \in \{0.5, 5.0\}$ on the Cifar-$10$ data set, using the same experimental setup as in~\Cref{sec:eval_privutiltradeoff}.

The results are shown in~\Cref{fig:rawlosses}, where \emph{Binary} denotes the original \tool{} and \emph{Raw Loss} denotes the proposed real-valued variant. Overall, the accuracy-based voting still selects reasonable hyperparameters, as its resulting accuracy remains relatively close to the binary baseline across settings. However, in both non-iid scenarios, a clear degradation is visible for $\priveps \leq 3$. We attribute this to the effective value range of accuracy: while it is bounded between zero and one, realistic accuracies rarely approach one, so the per-client signal injected into the vote is substantially smaller than the worst-case change assumed by the sensitivity analysis (which permits $\numlocaltops$ entries to shift by one). Consequently, the signal-to-noise ratio deteriorates more quickly under privacy noise than in the binary scheme, amplifying the impact of noise and worsening the privacy-utility trade-off. Although a real-valued vote could still be beneficial in edge cases due to its finer granularity, our experiments indicate that it typically increases noise sensitivity and can significantly reduce utility at stricter privacy levels.

\begin{figure}[t]
    \centering
    \includegraphics[width=\linewidth]{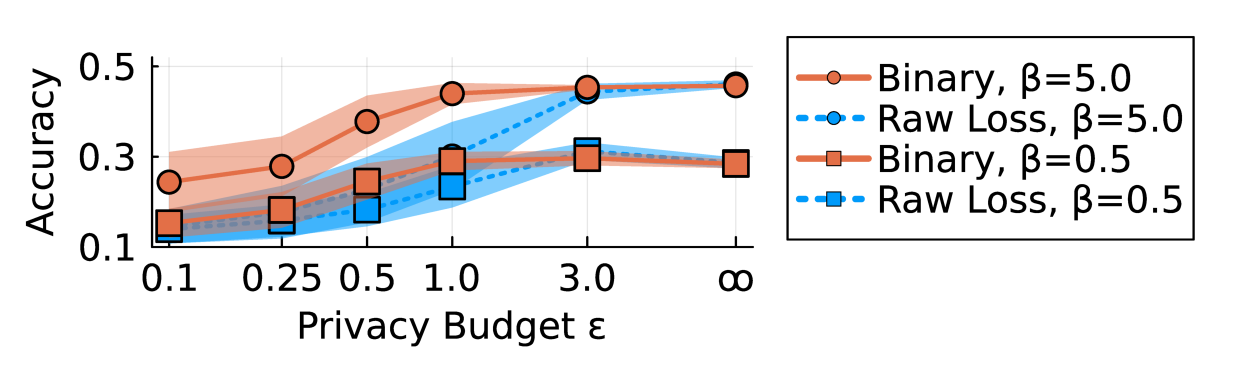}
    \caption{Comparison between default binary voting of \tool{} and an alternative approach where the accuracy for $\numlocaltops$ hyperparameters is reported directly instead of ones. The experiments are performed in Cifar-$10$ with $\numclients=100$. The utility of the alternative clearly degrades much faster for decreasing privacy budgets, which can be mainly attributed to a reduced signal-to-noise ratio as accuracy seldom reaches a value close to one.}
    \label{fig:rawlosses}
\end{figure}

\section{Comparison with Feathers}
\label{app:featherscomp}
We compare \tool{} with the privacy-preserving federated hyperparameter tuning that comes closest to our work, Feathers \cite{seng2022feathers}.
Feathers keeps a distribution over the hyperparameter candidates and updates this distribution to favor the configurations that attain the greatest decrease in clients' losses. Hyperparameter optimization is alternated with training phases that use the configuration that is rated the highest in the distribution.

However, a closer inspection of Feathers reveals that it does not preserve differential privacy; therefore we correct their privacy accounting before our evaluation. First, we clip the client loss values at $1.0$ to ensure bounded sensitivity of the losses which is necessary for guaranteeing DP. To calibrate the additive noise for the clipped loss values, we employ a Rényi DP accountant. We fix the number of DP approximated loss releases to $100$ to have an a priori bound on the privacy loss, which is necessary for satisfying formal DP guarantees. The vanilla version of Feathers uses the entropy of the hyperparameter distribution to select how many DP approximated losses are released, which only yields an a posteriori bound on the privacy loss, after a run completes.

In our evaluation, we compare \tool{} against Feathers on MNIST and Cifar-10 in the iid scenario with 250 clients. For Feathers, we configure all additional parameters as proposed in the paper \cite{seng2022feathers} and average the accuracy for 10 runs. The results are summarized in \Cref{fig:featherscomp}. For MNIST and $\varepsilon \geq 1.0$, Feathers can still attain its high accuracy. However, for smaller privacy budgets Feathers' accuracy decreases ($44.1\%$ accuracy for $\varepsilon=0.5$), while \tool{} still attains the same high accuracy. This is as expected because Feathers uses a local DP approach which adds unnecessarily large amounts of noise, quickly yielding an almost random draw from the hyperparameter candidates. This is also apparent for Cifar-10, where the accuracy of Feathers begins to decrease at $\varepsilon=3.0$ ($38.9\%$). This matches our expectation: \Cref{fig:lossdist} shows that for Cifar-10, fewer hyperparameter candidates perform well. Thus, Feathers' suboptimal signal-to-noise ratio likely leads it to select configurations that perform suboptimally.

\section{Parallelization Baseline}
\label{app:parabaseline}
\begin{figure}[t]
    \centering
    \includegraphics[width=\linewidth]{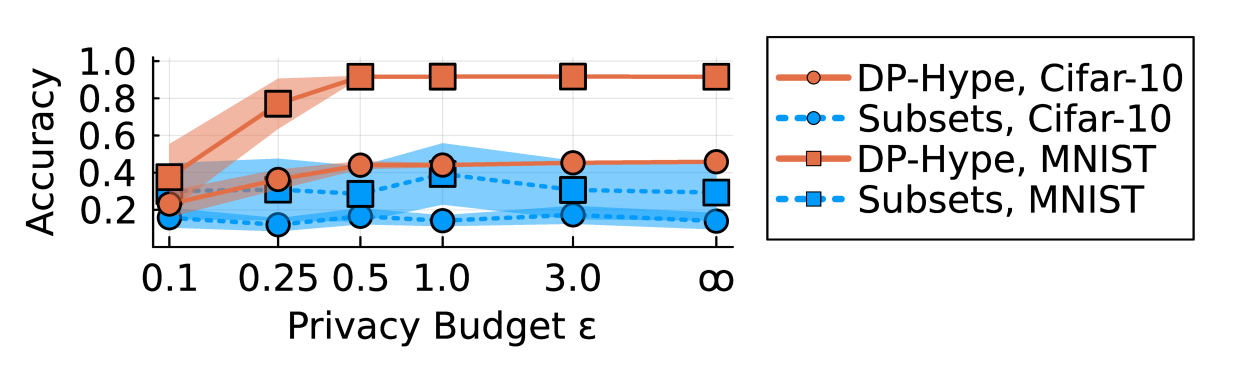}
    \caption{Comparison between \tool{} and a simple parallelization strategy where clients are randomly assigned into subsets each evaluating a single hyperparameter. Per subset noisy losses are aggregated and the server selects the hyperparameter with optimal loss. Experiments are performed on MNIST and Cifar-$10$ for $n=250,\dirichletparam=30$ and $n=100$, iid, respectively. \tool{} clearly outperforms the alternative strategy. The main reason for this is that the per-hyperparameter signal is just too small to overcome even small noise, and that even for no privacy the variance per hyperparameter evaluation is too large to let the best candidates win reliably.}
    \label{fig:subsets}
\end{figure}
\tool{} currently requires every client to train and evaluate a model for each hyperparameter candidate locally, which can be computationally expensive. To reduce this resource overhead, one can divide clients into random subsets and assign a single hyperparameter to each subset. The server then aggregates the reports by averaging the noisy losses per subset and selects the hyperparameter with the best average performance. This reduces the per-client workload to training just one model, but it also substantially lowers the signal-to-noise ratio for each hyperparameter, since each candidate is evaluated by only a fraction of the clients. From a privacy perspective, this strategy no longer involves a parameter $\numlocaltops$, and the sensitivity of the overall voting vector becomes $\sqrt{2}$ because replacing the data of a single client can affect at most two entries by $1$.

Results for \tool{} and this alternative, which we denote by \emph{Subsets}, are shown in~\Cref{fig:subsets}. Across both scenarios, \tool{} clearly outperforms \emph{Subsets}. Even in the MNIST iid setting, which we included to make the task as favorable as possible for \emph{Subsets}, the method does not exceed performance comparable to guessing. We attribute this to two factors. First, the poor performance even in the non-private regime indicates that evaluating each hyperparameter on only a small number of clients (e.g., two- or three-fold for $\numclients = 250$) does not yield a reliable signal: even strong hyperparameters tend to perform well for a majority of clients rather than uniformly for every client, and this majority effect is suppressed when evaluations are highly fragmented. Second, the signal-to-noise ratio is inherently smaller than in \tool{}, and this disadvantage compounds under privacy noise. Indeed, as observed in~\Cref{app:rawlosses}, using plain accuracy already provides a weak signal, and reducing the number of clients per hyperparameter further amplifies the impact of noise.

\begin{figure}[t]
    \centering
    \includegraphics[width=\linewidth]{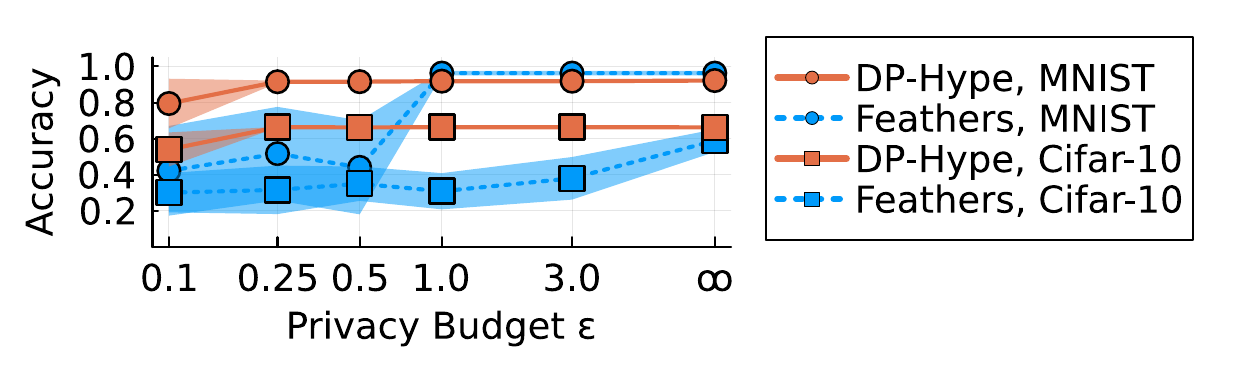}
    \caption{Empirical results in terms of Accuracy for evaluating \tool{} and Feathers \cite{seng2022feathers} on the data sets MNIST and Cifar-10 with $\numclients = 250$ clients and different privacy budgets. \tool{} clearly attains a superior privacy-utility trade-off compared to Feathers. Feathers often selects hyperparameters almost randomly, yielding suboptimal accuracy.}
    \label{fig:featherscomp}
\end{figure}

\begin{figure*}
\centering

\subfloat[{\textbf{MNIST data set}}]{
  \includegraphics[width=0.95\textwidth]{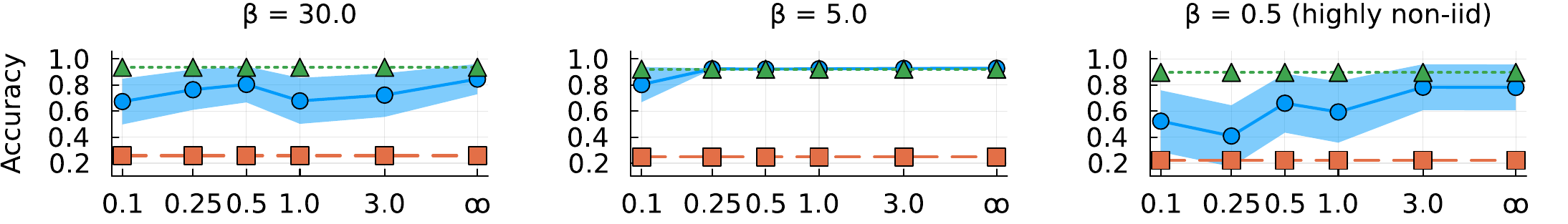}
  \label{fig:evalnoniidmnistN250}
}

\par

\subfloat[{\textbf{Cifar-$\mathbf{10}$ data set}}]{
  \includegraphics[width=0.95\textwidth]{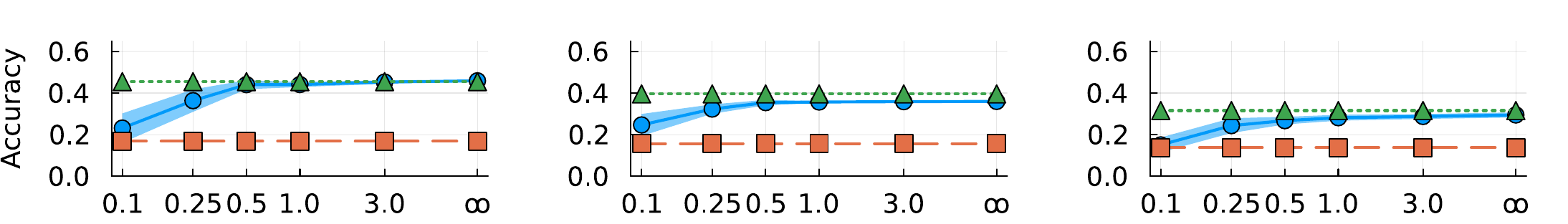}
  \label{fig:evalnoniidcifar10N250}
}

\par

\subfloat[{\textbf{Adult data set}}]{
  \includegraphics[width=0.95\textwidth]{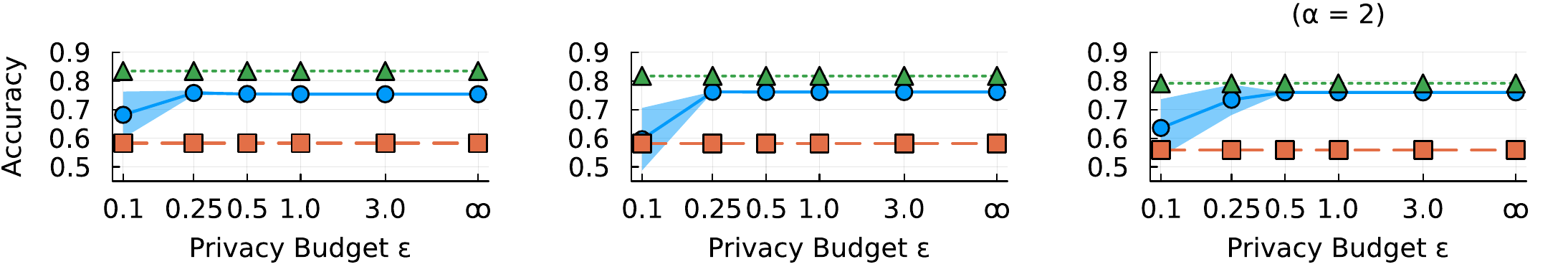}
  \label{fig:evalnoniidadultN250}
}

\par

\subfloat{
  \includegraphics[width=0.95\textwidth]{pets26/imgs/eval_iid_Legend_allN.pdf}
}

\caption{Empirical results in terms of Accuracy for evaluating \tool{}, \randguess{} and \opt{} on the data sets MNIST, Cifar-10 and Adult for $\numclients=250$ and different privacy budgets. We train a model in different non-iid scenario with Dirichlet concentration parameter $\dirichletparam \in \{30.0, 5.0, 0.5\}$ and $\dirichletparam = 2.0$ for $\numclients = 250$ on Adult. The degree of non-iid $\dirichletparam$ increases from left to right, which means that the data heterogeneity increases as well.}
\label{fig:evalnoniidN250}
\end{figure*}

\begin{figure*}
\centering

\subfloat[{\textbf{MNIST data set}}]{
  \includegraphics[width=0.95\textwidth]{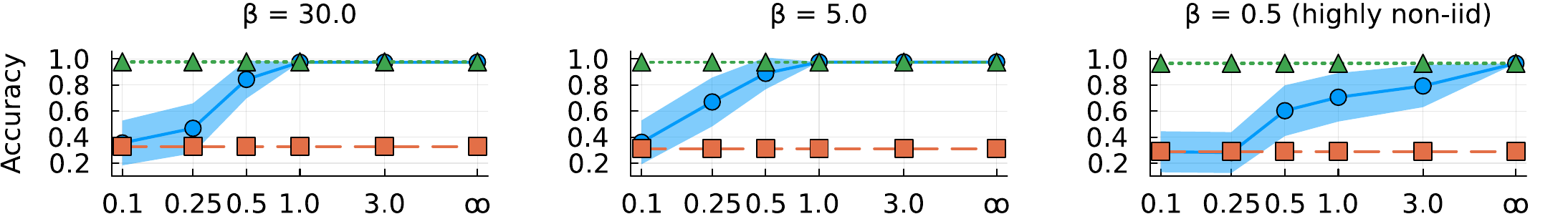}
  \label{fig:evalnoniidmnistN50}
}

\par

\subfloat[{\textbf{Cifar-$\mathbf{10}$ data set}}]{
  \includegraphics[width=0.95\textwidth]{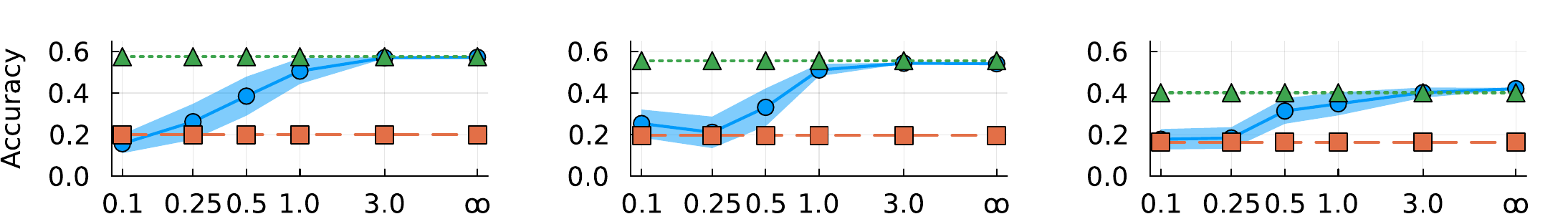}
  \label{fig:evalnoniidcifar10N50}
}

\par

\subfloat[{\textbf{Adult data set}}]{
  \includegraphics[width=0.95\textwidth]{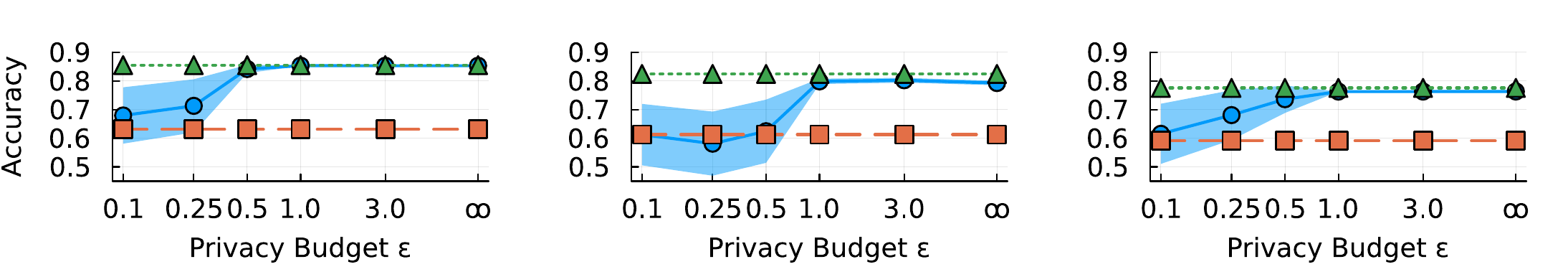}
  \label{fig:evalnoniidadultN50}
}

\par

\subfloat{
  \includegraphics[width=0.95\textwidth]{pets26/imgs/eval_iid_Legend_allN.pdf}
}

\caption{Empirical results in terms of Accuracy for evaluating \tool{}, \randguess{} and \opt{} on the data sets MNIST, Cifar-10 and Adult for $\numclients=50$ and different privacy budgets. We train a model in different non-iid scenario with Dirichlet concentration parameter $\dirichletparam \in \{30.0, 5.0, 0.5\}$. The degree of non-iid $\dirichletparam$ increases from left to right, which means that the data heterogeneity increases as well.}
\label{fig:evalnoniidN50}
\end{figure*}

\end{appendices}

\end{document}